\DeclareMathOperator*{\expectation}{\mathbb{E}}
\newcommand{\manifoldmixup}{\textit{Manifold Mixup}}
\newcommand{\inputmixup}{Input Mixup}
\renewcommand\epsilon\varepsilon
\title{Manifold Mixup: Better Representations by Interpolating Hidden States}
\author{
  Vikas Verma* $\dagger$  \\
  Aalto Univeristy, Finland \hfill\\
  \texttt{vikas.verma@aalto.fi} \\
   \And
   Alex Lamb* \\
   Montr\'{e}al Institute for Learning Algorithms \\
   \texttt{lambalex@iro.umontreal.ca} \\
   \And
   Christopher Beckham \\
   Montr\'{e}al Institute for Learning Algorithms \\
   \texttt{christopher.j.beckham@gmail.com} \\
      \And
   Amir Najafi \hspace{1.4cm} \\
   Sharif University of Technology\hspace{1.4cm}  \\
   \texttt{najafy@ce.sharif.edu}\hspace{1.4cm} \\
   \AND
   Ioannis Mitliagkas \\
   Montr\'{e}al Institute for Learning Algorithms \\
   \texttt{imitliagkas@gmail.com} \\
   \And
   Aaron Courville\hspace{0.8cm} \\
   Montr\'{e}al Institute for Learning Algorithms \\
   \texttt{courvila@iro.umontreal.ca} \\
   \AND
   \hspace{0.8cm}David Lopez-Paz\\
   \hspace{0.8cm}Facebook AI Research\\
   \hspace{0.8cm}\texttt{dlp@fb.com} \\
   \And
   \hspace{0.5cm}Yoshua Bengio \\
   \hspace{0.5cm}Montr\'{e}al Institute for Learning Algorithms \\
   \hspace{0.5cm}CIFAR Senior Fellow \\
   \hspace{0.5cm}\texttt{yoshua.umontreal@gmail.com} \\
}
\renewcommand\epsilon\varepsilon
\begin{document}

\newtheorem{thm}{Theorem}[section]
\newtheorem{thm2}{Theorem}
\newtheorem{corl}{Corollary}
\newtheorem{note}[thm2]{Note}
\newtheorem{lemma}{Lemma}
\newtheorem{definition}{Definition}
\newtheorem{remark}{Remark}
\newtheorem{claim}{Claim}

\maketitle

\begin{abstract}

Deep neural networks excel at learning the training data, but often provide incorrect and confident predictions when evaluated on slightly different test examples.
This includes distribution shifts, outliers, and adversarial examples.
To address these issues, we propose \manifoldmixup{}, a simple regularizer that encourages neural networks to predict less confidently on interpolations of hidden representations.
\manifoldmixup{} leverages semantic interpolations as additional training signal, obtaining neural networks with smoother decision boundaries at multiple levels of representation. 
As a result, neural networks trained with \manifoldmixup{} learn class-representations with fewer directions of variance.
We prove theory on why this flattening happens under ideal conditions, validate it on practical situations, and connect it to previous works on information theory and generalization.
In spite of incurring no significant computation and being implemented in a few lines of code, \manifoldmixup{} improves strong baselines in supervised learning, robustness to single-step adversarial attacks, and test log-likelihood.
\end{abstract}

\section{Introduction}

Deep neural networks are the backbone of state-of-the-art systems for computer vision, speech recognition, and language translation \citep{lecun2015deep}.  
However, these systems perform well only when evaluated on instances very similar to those from the training set.  
When evaluated on slightly different distributions, neural networks often provide incorrect predictions with strikingly high confidence.
This is a worrying prospect, since deep learning systems are being deployed in settings where data may be subject to distributional shifts. 
Adversarial examples \citep{szegedy2013adv} are one such failure case: deep neural networks with nearly perfect performance provide incorrect predictions with very high confidence when evaluated on perturbations imperceptible to the human eye.
Adversarial examples are a serious hazard when deploying machine learning systems in security-sensitive applications.
More generally, deep learning systems quickly degrade in performance as the distributions of training and testing data differ slightly from each other \citep{ben2010theory}.

\begin{figure*}[t!]
        \centering
        \begin{subfigure}{0.30\textwidth}
            \centering
            \includegraphics[scale=0.45]{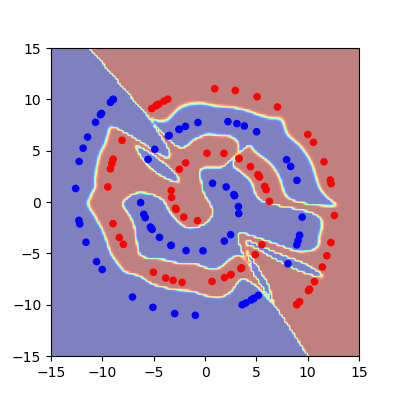}
            \caption[]%
            {}    
        \end{subfigure}
        \begin{subfigure}{0.30\textwidth}  
            \centering 
            \includegraphics[scale=0.45]{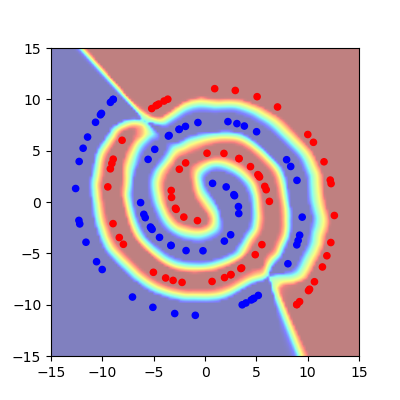}
            \caption[]%
            {}    
        \end{subfigure}
        \begin{subfigure}{0.30\textwidth}  
            \centering 
            \includegraphics[scale=0.45]{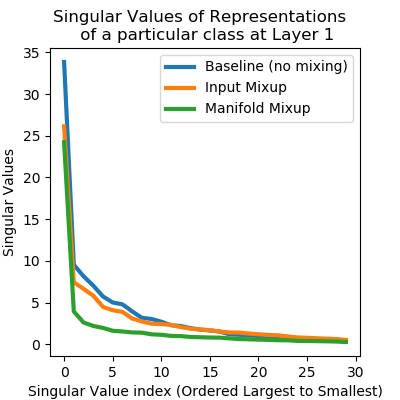}
            \caption[]%
            {}    
        \end{subfigure}
        \begin{subfigure}[b]{0.30\textwidth}   
            \centering 
            \includegraphics[scale=0.45]{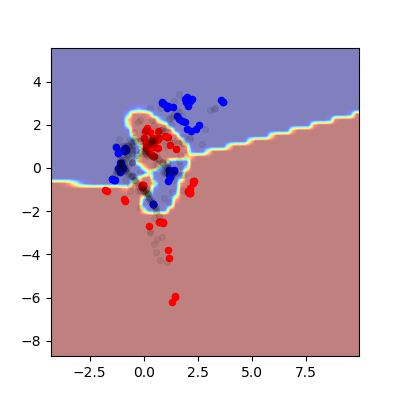}
            \caption[]%
            {}    
        \end{subfigure}
        \begin{subfigure}[b]{0.30\textwidth}   
            \centering 
            \includegraphics[scale=0.45]{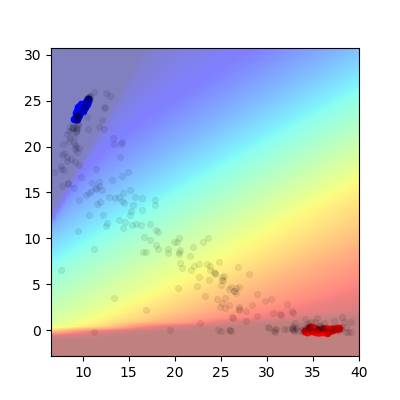}
            \caption[]%
            {}    
        \end{subfigure}
        \begin{subfigure}[b]{0.30\textwidth}  
            \centering 
            \includegraphics[scale=0.45]{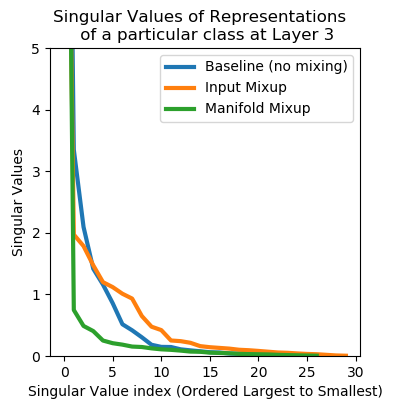}
            \caption[]%
            {}    
        \end{subfigure}
        
        \caption{An experiment on a network trained on the 2D spiral dataset with a 2D bottleneck hidden representation in the middle of the network. Manifold mixup has three effects on learning when compared to vanilla training. First, it smoothens decision boundaries (from a. to b.). Second, it improves the arrangement of hidden representations and encourages broader regions of low-confidence predictions (from d. to e.). Black dots are the hidden representation of the inputs sampled uniformly from the range of the input space.  Third, it flattens the representations (c. at layer 1, f. at layer 3). Figure \ref{appendix:figure:variousregs} shows that these effects are not accomplished by other well-studied regularizers (input mixup, weight decay, dropout, batch normalization, and adding noise to the hidden representations).}
        \label{fig:2dbottleneck}
\end{figure*}

In this paper, we realize several troubling properties concerning the hidden representations and decision boundaries of state-of-the-art neural networks.
First, we observe that the decision boundary is often sharp and close to the data.
Second, we observe that the vast majority of the hidden representation space corresponds to high confidence predictions, both on and off of the data manifold.

Motivated by these intuitions we propose \manifoldmixup{} (Section~\ref{sec:manifold_mixup}), a simple regularizer that addresses several of these flaws by training neural networks on linear combinations of hidden representations of training examples. 
Previous work, including the study of analogies through word embeddings (e.g. king $-$ man $+$ woman $\approx$ queen), has shown that interpolations are an effective way of combining factors \citep{mikolov2013wordembed}.  
Since high-level representations are often low-dimensional and useful to linear classifiers, linear interpolations of hidden representations should explore meaningful regions of the feature space effectively.
To use combinations of hidden representations of data as novel training signal, we also perform the same linear interpolation in the associated pair of one-hot labels, leading to mixed examples with soft targets.

To start off with the right intuitions, Figure~\ref{fig:2dbottleneck} illustrates the impact of \manifoldmixup{} on a simple two-dimensional classification task with small data.
In this example, vanilla training of a deep neural network leads to an irregular decision boundary (Figure~\ref{fig:2dbottleneck}a), and a complex arrangement of hidden representations (Figure~\ref{fig:2dbottleneck}d).
Moreover, every point in both the raw (Figure~\ref{fig:2dbottleneck}a) and hidden (Figure~\ref{fig:2dbottleneck}d) data representations is assigned a prediction with very high confidence.
This includes points (depicted in black) that correspond to inputs off the data manifold!  
In contrast, training the same deep neural network with \manifoldmixup{} leads to a smoother decision boundary (Figure~\ref{fig:2dbottleneck}b) and a simpler (linear) arrangement of hidden representations (Figure~\ref{fig:2dbottleneck}e).
In sum, the representations obtained by \manifoldmixup{} have two desirable properties: the class-representations are flattened into a minimal amount of directions of variation, and all points in-between these flat representations, most unobserved during training and off the data manifold, are assigned low-confidence predictions.
 
This example conveys the central message of this paper:
\begin{center}
    \emph{Manifold Mixup improves the hidden representations and decision boundaries of neural networks at multiple layers.}
\end{center}

\begin{figure*}[t]
\centering
\begin{minipage}{0.17\textwidth}
\begin{tikzpicture}
  \node (img)  {\includegraphics[scale=0.25]{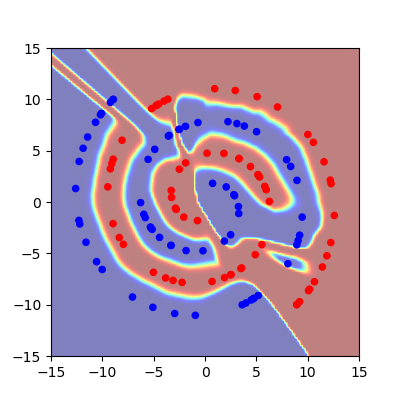}};
    \node[left=of img, node distance=0cm, rotate=90, anchor=center,yshift=-0.7cm] {Input Space};
    \node[left=of img, node distance=0cm, rotate=90, anchor=center,yshift=-0.7cm,font=\color{red}] {};
 \end{tikzpicture}
\end{minipage}%
\begin{minipage}{0.17\textwidth}
\begin{tikzpicture}
  \node (img)  {\includegraphics[scale=0.25]{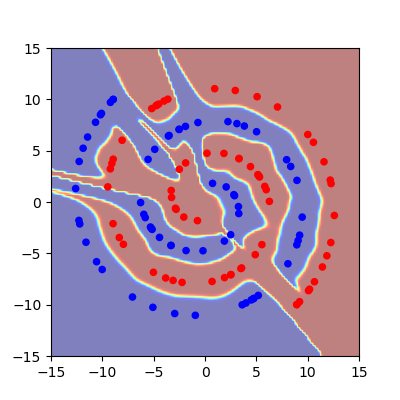}};
  \node[left=of img, node distance=0cm, rotate=90, anchor=center,yshift=-0.7cm,font=\color{red}] {};
\end{tikzpicture}
\end{minipage}%
\begin{minipage}{0.17\textwidth}
\begin{tikzpicture}
  \node (img)  {\includegraphics[scale=0.25]{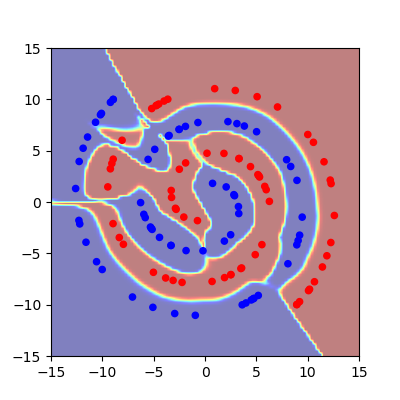}};
  \node[left=of img, node distance=0cm, rotate=90, anchor=center,yshift=-0.7cm,font=\color{red}] {};
\end{tikzpicture}
\end{minipage}%
\begin{minipage}{0.17\textwidth}
\begin{tikzpicture}
  \node (img)  {\includegraphics[scale=0.25]{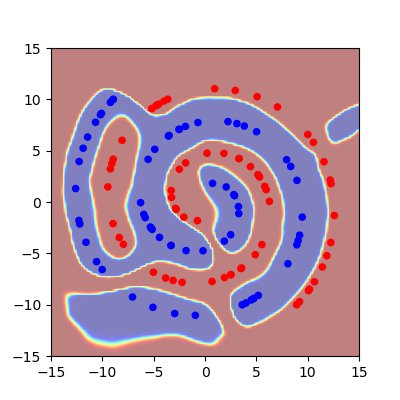}};
  \node[left=of img, node distance=0cm, rotate=90, anchor=center,yshift=-0.7cm,font=\color{red}] {};
\end{tikzpicture}
\end{minipage}%
\begin{minipage}{0.17\textwidth}
\begin{tikzpicture}
  \node (img)  {\includegraphics[scale=0.25]{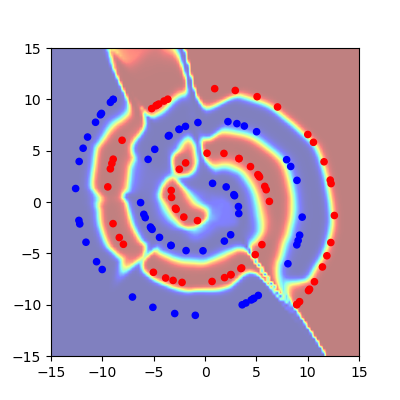}};
  \node[left=of img, node distance=0cm, rotate=90, anchor=center,yshift=-0.7cm,font=\color{red}] {};
\end{tikzpicture}
\end{minipage}%

\begin{minipage}{0.17\textwidth}
\begin{tikzpicture}
  \node (img)  {\includegraphics[scale=0.25]{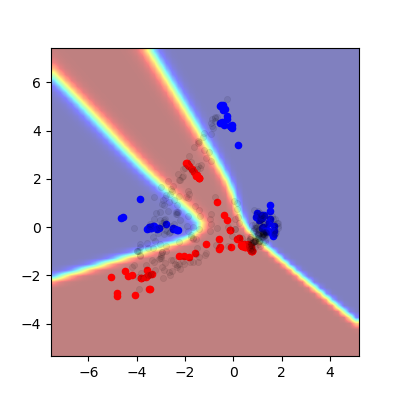}};
  \node[below=of img, node distance=0cm, yshift=1cm] {Weight Decay};
  \node[left=of img, node distance=0cm, rotate=90, anchor=center,yshift=-0.7cm] {Hidden space};
 \end{tikzpicture}
\end{minipage}%
\begin{minipage}{0.17\textwidth}
\begin{tikzpicture}
  \node (img)  {\includegraphics[scale=0.25]{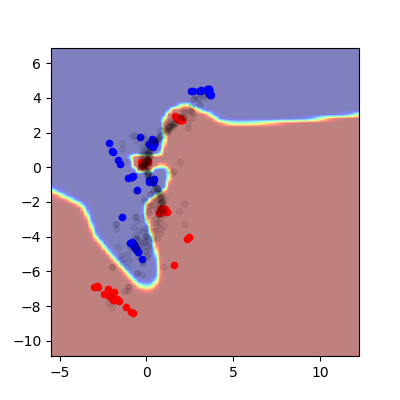}};
  \node[below=of img, node distance=0cm, yshift=1cm] {Noise};
  \node[left=of img, node distance=0cm, rotate=90, anchor=center,yshift=-0.7cm] {};
\end{tikzpicture}
\end{minipage}%
\begin{minipage}{0.17\textwidth}
\begin{tikzpicture}
  \node (img)  {\includegraphics[scale=0.25]{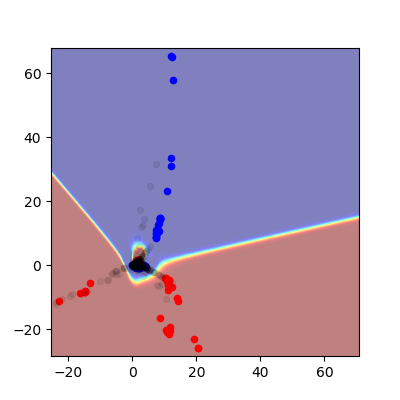}};
  \node[below=of img, node distance=0cm, yshift=1cm] {Dropout};
  \node[left=of img, node distance=0cm, rotate=90, anchor=center,yshift=-0.7cm] {};
\end{tikzpicture}
\end{minipage}%
\begin{minipage}{0.17\textwidth}
\begin{tikzpicture}
  \node (img)  {\includegraphics[scale=0.25]{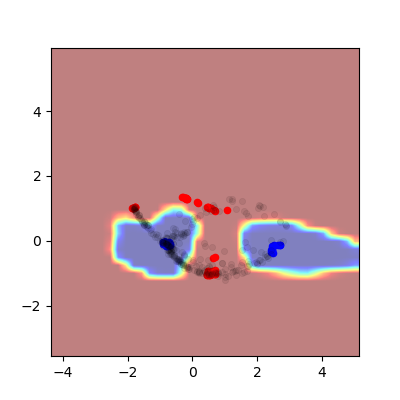}};
  \node[below=of img, node distance=0cm, yshift=1cm] {Batch-Norm};
  \node[left=of img, node distance=0cm, rotate=90, anchor=center,yshift=-0.7cm] {};
\end{tikzpicture}
\end{minipage}%
\begin{minipage}{0.17\textwidth}
\begin{tikzpicture}
  \node (img)  {\includegraphics[scale=0.25]{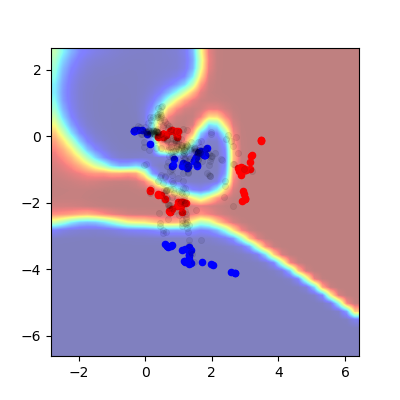}};
  \node[below=of img, node distance=0cm, yshift=1cm] {Input Mixup};
  \node[left=of img, node distance=0cm, rotate=90, anchor=center,yshift=-0.7cm] {};
\end{tikzpicture}
\end{minipage}%
\caption{The same experimental setup as Figure~\ref{fig:2dbottleneck}, but using a variety of competitive regularizers.  This shows that the effect of concentrating the hidden representation for each class and providing a broad region of low confidence between the regions is not accomplished by the other regularizers (although input space mixup does produce regions of low confidence, it does not flatten the class-specific state distribution).  Noise refers to gaussian noise in the input layer, dropout refers to dropout of 50\% in all layers except the bottleneck itself (due to its low dimensionality), and batch normalization refers to batch normalization in all layers.  }
\label{appendix:figure:variousregs}
\end{figure*}

More specifically, \manifoldmixup{} improves generalization in deep neural networks because it:
\begin{itemize}
    \item Leads to smoother decision boundaries that are further away from the training data, at multiple levels of representation.  Smoothness and margin are well-established factors of generalization \citep{bartlett1998generalization,lee1995vcsmooth}.  

    \item Leverages interpolations in deeper hidden layers, which capture higher level information \citep{zeiler2013visual} to provide additional training signal.

    \item Flattens the class-representations, reducing their number of directions with significant variance (Section~\ref{sec:flatten}). This can be seen as a form of compression, which is linked to generalization by a well-established theory \citep{tishby2015info,shwartz2017info} and extensive experimentation \citep{alemi2016bottleneck,belghazi2018mine,goyal2018transfer,achille2018information}.
\end{itemize}

Throughout a variety of experiments, we demonstrate four benefits of \manifoldmixup{}:

\begin{itemize}
    \item Better generalization than other competitive regularizers (such as Cutout, Mixup, AdaMix, and Dropout) (Section~\ref{sec:supervised}).

    \item Improved log-likelihood on test samples (Section~\ref{sec:supervised}).
    
    
    \item Increased performance at predicting data subject to novel deformations (Section~\ref{sec:deformations}).

    \item Improved robustness to single-step adversarial attacks.  This is the evidence  that \manifoldmixup{} pushes the decision boundary away from the data in some directions (Section~\ref{sec:adversarial}). This is not to be confused with full adversarial robustness, which is defined in terms of moving the decision boundary away from the data in \emph{all} directions.
\end{itemize}

\section{Manifold Mixup}
\label{sec:manifold_mixup}

Consider training a deep neural network $f(x) = f_k(g_k(x))$, where $g_k$ denotes the part of the neural network mapping the input data to the hidden representation at layer $k$, and $f_k$ denotes the part mapping such hidden representation to the output $f(x)$. 
Training $f$ using \manifoldmixup{} is performed in five steps.
First, we select a random layer $k$ from a set of eligible layers $\mathcal{S}$ in the neural network. This set may include the input layer $g_0(x)$.
Second, we process two random data minibatches $(x, y)$ and $(x', y')$ as usual, until reaching layer $k$.
This provides us with two intermediate minibatches $(g_k(x), y)$ and $(g_k(x'), y')$.
Third, we perform \inputmixup{} \citep{zhang2017mixup} on these intermediate minibatches.
This produces the mixed minibatch:
\begin{equation*}
    (\tilde{g}_k, \tilde{y}) := (\text{Mix}_\lambda(g_k(x), g_k(x')), \text{Mix}_\lambda(y, y')),
\end{equation*}
where $\text{Mix}_\lambda(a, b) = \lambda \cdot a + (1 - \lambda) \cdot b$.
Here, $(y, y')$ are one-hot labels, and the mixing coefficient $\lambda \sim \text{Beta}(\alpha, \alpha)$ as proposed in mixup \citep{zhang2017mixup}.
For instance, $\alpha = 1.0$ is equivalent to sampling $\lambda \sim U(0, 1)$. 
Fourth, we continue the forward pass in the network from layer $k$ until the output using the mixed minibatch $(\tilde{g}_k, \tilde{y})$.
Fifth, this output is used to compute the loss value and gradients that update all the parameters of the neural network.

Mathematically, \manifoldmixup{} minimizes:
\begin{align}
\label{eq:mixup_form}
    L(f) = &\expectation_{(x,y) \sim P}\,
    \expectation_{(x', y') \sim P}\,
    \expectation_{\lambda \sim \text{Beta}(\alpha, \alpha)}\,
    \expectation_{k \sim \mathcal{S}}
    \ell(f_k(\text{Mix}_\lambda(g_k(x), g_k(x'))), \text{Mix}_\lambda(y, y')).
\end{align}
Some implementation considerations.
We backpropagate gradients through the entire computational graph, including those layers before the mixup layer $k$ (Section \ref{sec:supervised} and appendix Section \ref{appendix:sec:learnedRepresentation} explore this issue in more detail).
In the case where $\mathcal{S} = \{0\}$, \manifoldmixup{} reduces to the original mixup algorithm of \citet{zhang2017mixup}.  

While one could try to reduce the variance of the gradient updates by sampling a random $(k, \lambda)$ per example, we opted for the simpler alternative of sampling a single $(k, \lambda)$ per minibatch, which in practice gives the same performance.
As in \inputmixup{}, we use a single minibatch to compute the mixed minibatch. We do so by mixing the minibatch with copy of itself with shuffled rows.



\begin{figure*}
    \centering
    \begin{subfigure}[b]{0.33\textwidth}
        \centering
        \begin{tikzpicture}[node distance=1cm, auto,]
            \node[very thick, fill=red!20, draw=red!50, circle] (B1) at (0,0) {B1};
            \node[very thick, fill=red!20, draw=red!50, circle] (B2) at (2,0) {B2};
            \node[very thick, fill=blue!20, draw=blue!50, circle] (A1) at (0,2) {A1};
            \node[very thick, fill=blue!20, draw=blue!50, circle] (A2) at (1.5,1.5) {A2};
            \draw[gray, dashed] (A1) -- (B1);
            \draw[gray, very thick] (A1) -- (B2);
            \draw[gray, dashed] (A2) -- (B1);
            \draw[gray, dashed] (A2) -- (B2);
            \draw[gray, dashed] (A1) -- (A2);
            \draw[gray, dashed] (B1) -- (B2);
            \node[very thick, fill=black!20, draw=black!50, circle] (dot) at (1, 1) {};
        \end{tikzpicture}
    \end{subfigure}%
    \begin{subfigure}[b]{0.33\textwidth}
        \centering
        \begin{tikzpicture}[node distance=1cm, auto,]
            \node[very thick, fill=red!20, draw=red!50, circle] (B1) at (0,0) {B1};
            \node[very thick, fill=red!20, draw=red!50, circle] (B2) at (2,0) {B2};
            \node[very thick, fill=blue!20, draw=blue!50, circle] (A1) at (0,2) {A1};
            \node[very thick, fill=blue!20, draw=blue!50, circle] (A2) at (1.5,1.5) {A2};
            \draw[gray, dashed] (A1) -- (B1);
            \draw[gray, dashed] (A1) -- (B2);
            \draw[gray, very thick] (A2) -- (B1);
            \draw[gray, dashed] (A2) -- (B2);
            \draw[gray, dashed] (A1) -- (A2);
            \draw[gray, dashed] (B1) -- (B2);
            \node[very thick, fill=black!20, draw=black!50, circle] (dot) at (0.9, 0.92) {};
        \end{tikzpicture}
    \end{subfigure}%
    \begin{subfigure}[b]{0.33\textwidth}
        \centering
        \begin{tikzpicture}[node distance=1cm, auto,]
            \node[very thick, fill=red!20, draw=red!50, circle] (B1) at (0,0) {B1};
            \node[very thick, fill=red!20, draw=red!50, circle] (B2) at (2,0) {B2};
            \node[very thick, fill=blue!20, draw=blue!50, circle] (A1) at (0,2) {A1};
            \node[very thick, fill=blue!20, draw=blue!50, circle] (A2) at (2,2) {A2};
            \draw[gray, dashed] (A1) -- (B1);
            \draw[gray, dashed] (A1) -- (B2);
            \draw[gray, dashed] (A2) -- (B1);
            \draw[gray, dashed] (A2) -- (B2);
            \draw[gray, dashed] (A1) -- (A2);
            \draw[gray, dashed] (B1) -- (B2);
            \node[very thick, fill=black!20, draw=black!50, circle] (dot) at (1, 1) {};
        \end{tikzpicture}
    \end{subfigure}
    \caption{Illustration on why Manifold Mixup learns flatter representations. The interpolation between A1 and B2 in the left panel soft-labels the black dot as 50\% red and 50\% blue, regardless of being very close to a blue point.  
    In the middle panel a different interpolation between A2 and B1 soft-labels the same point as ~95\% blue and ~5\% red.  
    However, since \emph{Manifold Mixup} \emph{learns} the hidden representations,
    the pressure to predict consistent soft-labels at interpolated points causes the states to become flattened (right panel).}
    \label{fig:intuitive}
\end{figure*}
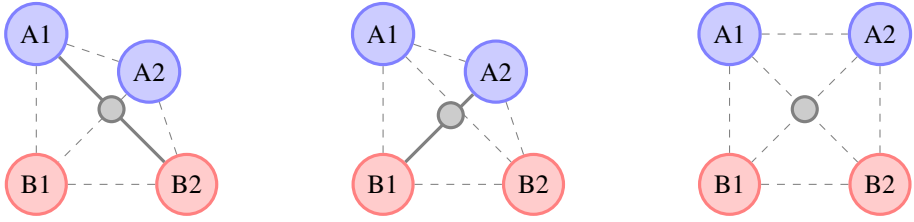


\section{Manifold Mixup Flattens Representations}
\label{sec:flatten}

We turn to the study of how \manifoldmixup{} impacts the hidden representations of a deep neural network.
At a high level, \manifoldmixup{} flattens the class-specific representations.
More specifically, this flattening reduces the number of directions with significant variance (akin to reducing their number of principal components).

In the sequel, we first prove a theory (Section~\ref{sec:theory}) that characterizes this behavior precisely under idealized conditions. 
Second, we show that this flattening also happens in practice, by performing the SVD of class-specific representations of neural networks trained on real datasets (Section~\ref{sec:empirical}).
%
Finally, we discuss why the flattening of class-specific representations is a desirable property (Section~\ref{sec:justification}).

\subsection{Theory}
\label{sec:theory}

We start by characterizing how the representations of a neural network are changed by \manifoldmixup{}, under a simplifying set of assumptions.
More concretely, we will show that if one performs mixup in a sufficiently deep hidden layer in a neural network, then the loss can be driven to zero if the dimensionality of that hidden layer $\mathrm{dim}\left(\mathcal{H}\right)$ is greater than the number of classes $d$.
As a consequence of this, the resulting representations for that class will have $\mathrm{dim}\left(\mathcal{H}\right)-d + 1$ dimensions.

A more intuitive and less formal version of this argument is given in Figure~\ref{fig:intuitive} and Appendix~\ref{appendix:sec:collision}.  

To this end, assume that $\mathcal{X}$ and $\mathcal{H}$ denote the input and representation spaces, respectively. We denote the label-set by $\mathcal{Y}$ and let $\mathcal{Z}=\mathcal{X}\times\mathcal{Y}$. Let $\mathcal{G}\subseteq\mathcal{H}^{\mathcal{X}}$ denote the set of functions realizable by the neural network, from the input to the representation. Similarly, let $\mathcal{F}\subseteq\mathcal{Y}^{\mathcal{H}}$ be the set of all functions realizable by the neural network, from the representation to the output.

We are interested in the solution of the following problem in some asymptotic regimes:
\begin{align}
\label{eq:amir:main}
    J(P) = &\inf_{g\in\mathcal{G},f\in\mathcal{F}}
    \expectation_{(x, y), (x', y'), \lambda}
    \ell(f(\text{Mix}_\lambda(g(x), g(x'))), \text{Mix}_\lambda(y, y')).
\end{align}
More specifically, let $P_D$ be the empirical distribution defined by a dataset $D = \{(x_i, y_i)\}_{i=1}^n$.
Then, let $f^\star \in \mathcal{F}$ and $g^\star \in \mathcal{G}$ be the minimizers of \eqref{eq:amir:main} for $P=P_D$.
Also, let $\mathcal{G}=\mathcal{H}^{\mathcal{X}}$, $\mathcal{F}=\mathcal{Y}^{\mathcal{H}}$, and $\mathcal{H}$ be a vector space.
These conditions \citep{cybenko1989approximation} state that the mappings realizable by large neural networks are dense in the set of all continuous bounded functions.
In this case, we show that the minimizer $f^\star$ is a linear function from $\mathcal{H}$ to $\mathcal{Y}$.
In this case, the objective \eqref{eq:amir:main} can be rewritten as:
%
%
\begin{align}
    J(P_D) &=
    \inf_{h_1, \ldots, h_n \in \mathcal{H}}
    \frac{1}{n\left(n-1\right)}\sum_{i\neq j}^{n}\left\{
    \inf_{f\in\mathcal{F}}
    \right.\label{obj2}
    \left.
    \int_{0}^{1}
    \ell(f(\text{Mix}_{\lambda}(h_i, h_j)), \text{Mix}_{\lambda}(y_i, y_j))\,
    p(\lambda)
    \mathrm{d}\lambda
    \right\},
    \nonumber
\end{align}
where $h_i = g(x_i)$.
\begin{thm2}
Let $\mathcal{H}$ be a vector space of dimension $\mathrm{dim}\left(\mathcal{H}\right)$, and let $d\in\mathbb{N}$ to represent the number classes contained in some dataset $D$. If $\dim\left(\mathcal{H}\right)\ge d-1$, then $J(P_D)=0$ and the corresponding minimizer $f^\star$ is a linear function from $\mathcal{H}$ to $\mathbb{R}^d$.
\label{thm:mainThm}
\end{thm2}
\begin{proof}
First, we observe that the following statement is true if $\dim\left(\mathcal{H}\right)\ge d-1$:
\begin{equation*}
\exists~
{A},{H}\in\mathbb{R}^{\mathrm{dim}\left(\mathcal{H}\right)\times d},
{b}\in\mathbb{R}^{d} : {A}^\top{H}+{b}{1}^\top_{d}=I_{d\times d},
\end{equation*}
where $I_{d\times d}$ and ${1}_d$ denote the $d$-dimensional identity matrix and all-one vector, respectively. In fact, ${b}{1}^\top_{d}$ is a rank-one matrix, while the rank of identity matrix is $d$. So, ${A}^\top{H}$ only needs rank $d-1$.

Let $f^\star(h) = {A}^\top{h}+{b}$ for all $h \in \mathcal{H}$.
Let $g^\star({x}_i)= {H}_{\zeta_i, :}$ be the $\zeta_i$-th column of $H$, where $\zeta_i\in\left\{1,\ldots,d\right\}$ stands for the class-index of the example $x_i$.
These choices minimize \eqref{eq:amir:main}, since:
\begin{align*}
    \ell(f^\star(\text{Mix}_{\lambda}(g^\star(x_i), g^\star(x_j))), \text{Mix}_{\lambda}(y_i, y_j)) &=\\
    \ell(A^\top\text{Mix}_{\lambda}(H_{\zeta_i,:}, H_{\zeta_j,:}) + b, \text{Mix}_{\lambda}(y_{i,\zeta_i}, y_{j,\zeta_j})) &=
    \ell(u, u) = 0.
\end{align*}
The result follows from ${A}^\top{H}_{\zeta_i,:}+{b}={y}_{i,\zeta_i}$ for all $i$.
\end{proof}

Furthermore, if $\dim\left(\mathcal{H}\right)>d-1$, then data points in the representation space $\mathcal{H}$ have some degrees of freedom to move independently.
\\[-2mm]
\begin{corl}
Consider the setting in Theorem \ref{thm:mainThm} with $\dim\left(\mathcal{H}\right)>d-1$. Let $g^\star\in\mathcal{G}$ minimize \eqref{eq:amir:main} under $P=P_D$. Then, the representations of the training points $g^\star({x}_i)$ fall on a $\left(\dim\left(\mathcal{H}\right)-d+1\right)$-dimensional subspace.
\end{corl}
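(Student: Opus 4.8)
The plan is to combine the explicit structure of the minimizer supplied by Theorem~\ref{thm:mainThm} with a rank--nullity argument. By that theorem the minimizing pair attains $J(P_D)=0$ and its $f^\star$ is affine, $f^\star(h)={A}^\top h+{b}$ for some ${A}\in\mathbb{R}^{\dim(\mathcal{H})\times d}$, ${b}\in\mathbb{R}^{d}$. Write $h_i:=g^\star({x}_i)$ and let ${y}_i$ be the one-hot label of ${x}_i$. First I would convert ``zero loss'' into linear constraints on the $h_i$: since $\ell(u,v)=0$ only when $u=v$ and $\mathrm{Beta}(\alpha,\alpha)$ has full support on $(0,1)$, for every ordered pair $(i,j)$ the identity ${A}^\top\mathrm{Mix}_\lambda(h_i,h_j)+{b}=\mathrm{Mix}_\lambda({y}_i,{y}_j)$ holds for $p(\lambda)$-almost every $\lambda$, hence --- both sides being affine in $\lambda$ --- for all $\lambda\in[0,1]$; evaluating at $\lambda=1$ gives $f^\star(h_i)={y}_i$, i.e.\ ${A}^\top h_i={y}_i-{b}$ for all $i$.

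The key step is to show $\mathrm{rank}({A})\ge d-1$. As $D$ contains all $d$ classes, $\{{y}_i-{b}:1\le i\le n\}=\{{e}_c-{b}:1\le c\le d\}$, and the linear span of this set contains every difference ${e}_c-{e}_{c'}$; these span the $(d-1)$-dimensional subspace $\{v\in\mathbb{R}^d:\mathbf{1}^\top v=0\}$. Since all ${y}_i-{b}$ lie in the image of ${A}^\top$, we get $\mathrm{rank}({A}^\top)=\mathrm{rank}({A})\ge d-1$, so by rank--nullity $\dim\ker({A}^\top)=\dim(\mathcal{H})-\mathrm{rank}({A}^\top)\le\dim(\mathcal{H})-d+1$. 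This is precisely where the ``$-d+1$'' (and not ``$-d$'') comes from: the $d$ one-hot targets are affinely independent but span only a $(d-1)$-dimensional linear subspace of $\mathbb{R}^d$.

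To conclude I would argue class by class: if ${x}_i,{x}_{i'}$ belong to the same class $c$ then ${A}^\top(h_i-h_{i'})=({e}_c-{b})-({e}_c-{b})=0$, so $h_i-h_{i'}\in\ker({A}^\top)$; hence, fixing any $i_0$ of class $c$, all representations of class $c$ lie in the affine subspace $h_{i_0}+\ker({A}^\top)$, of dimension at most $\dim(\mathcal{H})-d+1$, with equality in the generic case $\mathrm{rank}({A})=d-1$. This matches the informal claim preceding the theorem that the class-specific representations occupy $\dim(\mathcal{H})-d+1$ dimensions. (If instead $\mathrm{rank}({A})=d$, which happens e.g.\ when $\mathbf{1}^\top{b}\neq1$, the bound improves to $\dim(\mathcal{H})-d$, and a fortiori the representations still lie in a $(\dim(\mathcal{H})-d+1)$-dimensional subspace.)

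I expect the rank bound to be the only real obstacle, and even it is short once one notices that the relevant targets are $d$ one-hot vectors rather than $d$ generic vectors. The remaining pieces --- extracting the constraints ${A}^\top h_i={y}_i-{b}$ from Theorem~\ref{thm:mainThm}, and observing that same-class points share a common image under ${A}^\top$ --- are routine. One point worth stating explicitly is that the conclusion is naturally per class: the cosets $h_{i_0}+\ker({A}^\top)$ attached to the different classes are parallel translates but in general distinct, so the union of all training representations need not itself lie in a single $(\dim(\mathcal{H})-d+1)$-dimensional subspace.
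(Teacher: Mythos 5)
Your proof is correct, but it runs in the opposite direction from the paper's, and the difference is worth noting. The paper's proof is constructive: it reuses the matrices $A$, $H$, $b$ from the proof of Theorem~\ref{thm:mainThm}, observes that $A^\top H=I_{d\times d}-b\mathbf{1}_d^\top$ only requires $A^\top H$ to have rank $d-1$, so $A$ can be chosen with a null-space of dimension $\dim(\mathcal{H})-d+1$, and then exhibits zero-loss minimizers $g^\star(x_i)=H_{\zeta_i,:}+e_i$ with $e_i\in\ker(A^\top)$ arbitrary. That is an existence/degrees-of-freedom statement: there are minimizers whose class-$c$ representations sweep out a $(\dim(\mathcal{H})-d+1)$-dimensional coset. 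You instead prove a necessity statement: for any zero-loss minimizer paired with an affine $f^\star$, the constraints $A^\top h_i=y_i-b$, together with the observation that the $d$ one-hot targets span only a $(d-1)$-dimensional linear subspace, force $\operatorname{rank}(A)\ge d-1$, and rank--nullity then confines each class's representations to a coset of $\ker(A^\top)$ of dimension at most $\dim(\mathcal{H})-d+1$. Your reading actually matches the sentence following the corollary (``the representation of each class lies on a subspace of dimension $\dim(\mathcal{H})-d+1$'') better than the paper's own proof does, and your closing remark that the conclusion is per class --- parallel but generally distinct cosets, not one common subspace --- makes explicit something the paper leaves implicit. The one caveat: your argument genuinely needs $f^\star$ to be affine. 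Since $\mathcal{F}=\mathcal{Y}^{\mathcal{H}}$ contains all functions, there exist zero-loss pairs $(f,g)$ with representations in general position (no two interpolation segments colliding), for which no such dimension bound holds; you do import the affineness hypothesis from Theorem~\ref{thm:mainThm}, so the argument is internally consistent, but be aware that this assumption is doing real work rather than being a convenience.
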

\begin{proof}
From the proof of Theorem \ref{thm:mainThm}, ${A}^\top{H}=I_{d\times d}-{b}{1}^\top_{d}$.
The r.h.s. of this expression is a rank-$\left(d-1\right)$ matrix for a properly chosen ${b}$. Thus, ${A}$ can have a null-space of dimension $\dim\left(\mathcal{H}\right)-d+1$. This way, one can assign $g^\star({x}_i)={H}_{\zeta_i,:}+{e}_i$, where ${H}_{\zeta_i,:}$ is defined as in the proof of Theorem \ref{thm:mainThm}, and ${e}_i$ are arbitrary vectors in the null-space of ${A}$, for all $i=1,\ldots,n$.
\end{proof}

This result implies that if the \manifoldmixup{} loss is minimized, then the representation of each class lies on a subspace of dimension $\mathrm{dim}\left(\mathcal{H}\right)-d + 1$.  In the extreme case where $\mathrm{dim}\left(\mathcal{H}\right)= d - 1$, each class representation will collapse to a single point, meaning that hidden representations would not change in any direction, for each class-conditional manifold.  In the more general case  with larger $\mathrm{dim}\left(\mathcal{H}\right)$, the majority of directions in $\mathcal{H}$-space will be empty in the class-conditional manifold.

\subsection{Empirical Investigation of Flattening}
\label{sec:empirical}

We now show that the ``flattening'' theory that we have just developed also holds for real neural networks networks trained on real data.
To this end, we trained a collection of fully-connected neural networks on the MNIST dataset using multiple regularizers, including \manifoldmixup{}.
When using \manifoldmixup{}, we mixed representations at a single, fixed hidden layer per network.
After training, we performed the Singular Value Decomposition (SVD) of the hidden representations of each network, and analyzed their spectrum decay.

More specifically, we computed the largest singular value per class, as well as the sum of the all other singular values.
We computed these statistics at the first hidden layer for all networks and regularizers.
For the largest singular value, we obtained: 51.73 (baseline), 33.76 (weight decay), 28.83 (dropout), 33.46 (input mixup), and 31.65 (manifold mixup).
For the sum of all the other singular values, we obtained: 78.67 (baseline), 73.36 (weight decay), 77.47 (dropout), 66.89 (input mixup), and 40.98 (manifold mixup).
Therefore, weight decay, dropout, and input mixup all reduce the largest singular value, but only \manifoldmixup{} achieves a reduction of the sum of the all other singular values (e.g. flattening).
For more details regarding this experiment, consult Appendix~\ref{appendix:sec:flattening}.

%

\subsection{Why is Flattening Representations Desirable?}
\label{sec:justification}

We have presented evidence to conclude that \manifoldmixup{} leads to flatter class-specific representations, and that such flattening is not accomplished by other regularizers. 

But why is this flattening desirable?
First, it means that the hidden representations computed from our data occupy a much smaller volume.
Thus, a randomly sampled hidden representation within the convex hull spanned by the data in this space is more likely to have a classification score with lower confidence (higher entropy).
Second, compression has been linked to generalization in the information theory literature \citep{tishby2015info,shwartz2017info}.  Third compression has been been linked to generalization empirically in the past by work which minimizes mutual information between the features and the inputs as a regularizer \citep{belghazi2018mine,alemi2016bottleneck,achille2018information}.  

\section{Related Work}

Regularization is a major area of research in machine learning.
\manifoldmixup{} is a generalization of \inputmixup{}, the idea of building random interpolations between training examples and perform the same interpolation for their labels \citep{zhang2017mixup,tokozume2017betweenclass}.

Intriguingly, our experiments show that \manifoldmixup{} changes the representations associated to the layers before and after the mixing operation, and that this effect is crucial to achieve good results (Section \ref{sec:supervised}, Appendix \ref{appendix:sec:flattening}). This suggests that \manifoldmixup{} works differently than \inputmixup{}.  

Another line of research closely related to \manifoldmixup{} involves regularizing deep networks by perturbing their hidden representations.
These methods include dropout \citep{hinton2012dropout}, batch normalization \citep{ioffe2015bn}, and the information bottleneck \citep{alemi2016bottleneck}. Notably, \cite{hinton2012dropout} and \cite{ioffe2015bn} demonstrated that regularizers that work well in the input space can also be applied to the hidden layers of a deep network, often to further improve results.
We believe that \manifoldmixup{} is a complimentary form of regularization.  


\citet{zhao2018retrieve} explored improving adversarial robustness by classifying points using a function of the nearest neighbors in a fixed feature space.  This involves applying mixup between each set of nearest neighbor examples in that feature space.  The similarity between \citep{zhao2018retrieve} and \manifoldmixup{} is that both consider linear interpolations of hidden representations with the same interpolation applied to their labels.  However, an important difference is that \manifoldmixup{} backpropagates gradients through the earlier parts of the network (the layers before the point where mixup is applied), unlike \citep{zhao2018retrieve}.  In Section  \ref{sec:flatten} we explain how this discrepancy significantly affects the learning process.  

AdaMix \citep{guo2018adamix} is another related method which attempts to learn better mixing distributions to avoid overlap.  AdaMix performs interpolations only on the input space, reporting that their method degrades significantly when applied to hidden layers.  Thus, AdaMix may likely work for different reasons  than \manifoldmixup{}, and perhaps the two are complementary.
AgrLearn \citep{guo2018agrlearn} adds an information bottleneck layer to the output of deep neural networks.  AgrLearn leads to substantial improvements, achieving 2.45\% test error on CIFAR-10 when combined with Input Mixup \citep{zhang2017mixup}.  As AgrLearn is complimentary to Input Mixup, it may be also complimentary to \manifoldmixup{}.  \citet{wang2018interpolate} proposed an interpolation exclusively in the output space, does not backpropagate through the interpolation procedure, and has a very different framing in terms of the Euler-Lagrange equation (Equation 2) where the cost is based on unlabeled data (and the pseudolabels at those points) and the labeled data provide constraints.   


\section{Experiments}

We now turn to the empirical evaluation of \manifoldmixup{}.
We will study its regularization properties in supervised learning (Section~\ref{sec:supervised}),
as well as how it affects the robustness of neural networks to novel input deformations (Section~\ref{sec:deformations}), and adversarial examples (Section~\ref{sec:adversarial}). 

\subsection{Generalization on Supervised Learning}
\label{sec:supervised}

{\renewcommand{\arraystretch}{1.4}
\begin{table*}
           \centering
           \captionsetup[subtable]{position = top}
           \captionsetup[table]{position=top}
           \caption{Classification errors on (a) CIFAR-10 and (b) CIFAR-100.
                    We include results from \citep{zhang2017mixup}$\dagger$ and \citep{adamix}$\ddagger$.
                    Standard deviations over five repetitions.
                    }
           \label{tab:supervised}
           \begin{subtable}{0.5\linewidth}
               \resizebox{\linewidth}{!}{
               \begin{tabular}{lrr} 
               \toprule
               PreActResNet18 & \shortstack[l]{Test Error (\%)} & \shortstack[l]{Test NLL}  \\ 
               \midrule
               No Mixup & $4.83 \pm 0.066$ & $0.190 \pm 0.003$  \\ 
               \shortstack[l]{AdaMix$\ddagger$} & $3.52$ & NA \\ 
               \shortstack[l]{\inputmixup{}$\dagger$} & $4.20$ & NA \\ 
               \shortstack[l]{\inputmixup{} ($\alpha=1$)} & $3.82\pm 0.048$ & $0.186 \pm 0.004$\\
               \shortstack[l]{\manifoldmixup{} ($\alpha=2$)} & $\underline{2.95 \pm 0.046}$ & $\underline{0.137 \pm 0.003}$\\
               \midrule
               PreActResNet34 \\
               \midrule
               No Mixup & $4.64 \pm 0.072$ & $0.200 \pm 0.002$ \\ 
               \shortstack[l]{\inputmixup{} ($\alpha=1$)} & $2.88\pm 0.043$ & $0.176 \pm 0.002$  \\ 
               \shortstack[l]{\manifoldmixup{} ($\alpha=2$)} & $\underline{2.54 \pm 0.047}$ & $\underline{0.118 \pm 0.002}$\\  
               \midrule
               Wide-Resnet-28-10 \\
               \midrule
               No Mixup & $3.99\pm0.118$ & $0.162\pm 0.004$ \\ 
               \shortstack[l]{\inputmixup{} ($\alpha=1$)} & $2.92\pm 0.088$ & $0.173\pm 0.001$  \\ 
               \shortstack[l]{\manifoldmixup{} ($\alpha=2$)} & $\underline{2.55 \pm 0.024}$ & $\underline{0.111\pm 0.001}$\\  
               \bottomrule
               \end{tabular}
               }
               \caption{CIFAR-10}
               \label{tab:cifar10}
           \end{subtable}%
           \begin{subtable}{0.5\linewidth}
               \resizebox{\linewidth}{!}{
               \begin{tabular}{lrr} 
               \toprule
               PreActResNet18& \shortstack[l]{Test Error (\%)} & \shortstack[l]{Test NLL} \\ 
               \midrule
               No Mixup & $24.01\pm0.376$ & $1.189\pm0.002$ \\ 
               \shortstack[l]{AdaMix$\ddagger$} & 20.97 & n/a \\ 
               \shortstack[l]{\inputmixup{}$\dagger$} & 21.10 & n/a \\ 
               \shortstack[l]{\inputmixup{} ($\alpha=1$) } & $22.11\pm0.424$ & $1.055\pm0.006$ \\
               \manifoldmixup{} ($\alpha=2$) & {$\underline{20.34\pm0.525}$} & {$\underline{0.912\pm0.002}$} \\
               \midrule
               PreActResNet34 \\
               \midrule
               No Mixup & $23.55\pm0.399$ & $1.189\pm0.002 $ \\ 
               \inputmixup{} ($\alpha=1$) & $20.53\pm0.330$  & $1.039\pm0.045$ \\ 
               \manifoldmixup{} ($\alpha=2$) & $\underline{18.35\pm0.360}$ & $\underline{0.877\pm0.053}$ \\
               
               \midrule
               Wide-Resnet-28-10 \\
               \midrule
               No Mixup & $21.72\pm0.117$ & $1.023\pm0.004$ \\ 
               \shortstack[l]{\inputmixup{} ($\alpha=1$)} & $18.89\pm0.111$& $0.927\pm0.031 $ \\ 
               \shortstack[l]{\manifoldmixup{} ($\alpha=2$)} & $\underline{18.04\pm0.171}$ & $\underline{0.809\pm0.005}$\\  
               \bottomrule
               \end{tabular}
               }
               \caption{CIFAR-100}
               \label{tab:cifar100}
           \end{subtable}
           
\end{table*}
}

\begin{table}
\centering
\begin{minipage}{0.52\textwidth}
\caption{Classification errors and neg-log-likelihoods on SVHN.  We run each experiment five times.  }
\label{tab:svhn}
\resizebox{\linewidth}{!}{
\begin{tabular}{lrr} 
\toprule
PreActResNet18& \shortstack[l]{Test Error (\%)} & \shortstack[l]{Test NLL}\\ 
\midrule

No Mixup & $2.89\pm0.224$ & $0.136\pm0.001$ \\ 
Input Mixup ($\alpha=1$) & $2.76\pm0.014$ & $0.212\pm0.011$ \\
\manifoldmixup{} ($\alpha=2$) & \underline{$2.27\pm0.011$} & \underline{$0.122\pm0.006$} \\

\midrule
PreActResNet34 \\
\midrule
No Mixup & $2.97\pm0.004$ & $0.165\pm0.003$  \\ 
\inputmixup{} ($\alpha=1$) & $2.67\pm0.020$  & $0.199\pm0.009$ \\ 
\manifoldmixup{} ($\alpha=2$) & \underline{$2.18\pm0.004$} & \underline{$0.137\pm0.008$} \\
\midrule
Wide-Resnet-28-10 \\
\midrule
No Mixup & $2.80\pm0.044$ & $0.143\pm0.002$  \\ 
\inputmixup{} ($\alpha=1$) & $2.68\pm0.103$  & $0.184\pm0.022$ \\ 
\manifoldmixup{} ($\alpha=2$) & \underline{$2.06\pm0.068$} & \underline{$0.126\pm0.008$} \\
\bottomrule
\end{tabular}
}
\end{minipage}%
\hfill
\begin{minipage}{0.40\textwidth}
\centering
\caption{Accuracy on TinyImagenet.}
\label{tab:tinyimagenet}
\resizebox{\linewidth}{!}{
\begin{tabular}{lrr} 
\toprule
    PreActResNet18 & top-1 & top-5\\
\midrule
No Mixup & 55.52 & 71.04 \\ 
Input Mixup ($\alpha=0.2$) & $56.47$ & $71.74$ \\
Input Mixup ($\alpha=0.5$) & $55.49$ & $71.62$ \\
Input Mixup ($\alpha=1.0$) & $52.65$ & $70.70$ \\
Input Mixup ($\alpha=2.0$) & $44.18$ & $68.26$ \\
\manifoldmixup{} ($\alpha=0.2$) & $\underline{58.70}$ & $\underline{73.59}$ \\
\manifoldmixup{} ($\alpha=0.5$) & $57.24$ & $73.48$ \\
\manifoldmixup{} ($\alpha=1.0$) & $56.83$ & $73.75$ \\
\manifoldmixup{} ($\alpha=2.0$) & $48.14$ & $71.69$ \\
\bottomrule
\end{tabular}
}
\end{minipage}
\end{table}

We train a variety of residual networks \citep{he2016identity} using different regularizers: no regularization, AdaMix, \inputmixup{}, and \manifoldmixup{}.
We follow the training procedure of \citep{zhang2017mixup}, which is to use SGD with momentum, a weight decay of $10^{-4}$, and a step-wise learning rate decay.
Please refer to Appendix~\ref{appendix:sec:sl} for further details (including the values of the hyperparameter $\alpha$).
We show results for the CIFAR-10 (Table~\ref{tab:cifar10}), CIFAR-100 (Table~\ref{tab:cifar100}), SVHN (Table~\ref{tab:svhn}), and TinyImageNET (Table~\ref{tab:tinyimagenet}) datasets.
\manifoldmixup{} outperforms vanilla training, AdaMix, and \inputmixup{} across datasets and model architectures.
Furthermore, \manifoldmixup{} leads to models with significantly better Negative Log-Likelihood (NLL) on the test data.
In the case of CIFAR-10, \manifoldmixup{} models achieve as high as $50\%$ relative improvement of test NLL.



As a complimentary experiment to better understand why \manifoldmixup{} works, we zeroed gradient updates immediately after the layer where mixup is applied.
On the dataset CIFAR-10 and using a PreActResNet18, this led to a 4.33\% test error, which is worse than our results for \inputmixup{} and \manifoldmixup{}, yet better than the baseline.
Because \manifoldmixup{} selects the mixing layer at random, each layer is still being trained even when zeroing gradients, although it will receive less updates.
This demonstrates that \manifoldmixup{} improves performance by updating the layers both before and after the mixing operation.  


We also compared \manifoldmixup{} against other strong regularizers. For each regularizer, we selected the best hyper-parameters using a validation set. The training of PreActResNet50 on CIFAR-10 for 600 epochs led to the following test errors (\%): no regularization ($4.96 \pm 0.19$), Dropout ($5.09 \pm 0.09$), Cutout \citep{cutout} ($4.77 \pm 0.38$), Mixup ($4.25 \pm 0.11$), and Manifold Mixup ($3.77 \pm 0.18$). (Note that the results in Table~\ref{tab:supervised} for PreActResNet were run for 1200 epochs, and therefore are not directly comparable to the numbers in this paragraph.)  

To provide further evidence about the quality of representations learned with \manifoldmixup{}, we applied a $k$-nearest neighbour classifier on top of the features extracted from a PreActResNet18 trained on CIFAR-10.
We achieved test errors of 6.09\% (vanilla training), 5.54\% (\inputmixup{}), and 5.16\% (\manifoldmixup{}). 

Finally, we considered a synthetic dataset where the data generating process is a known function of disentangled factors of variation, and mixed in this space factors.  As shown in Appendix~\ref{appendix:sec:synthetic}, this led to significant improvements in performance. This suggests that mixing in the correct level of representation has a positive impact on the decision boundary. However, our purpose here is not to make any claim about when do deep networks learn representations corresponding to disentangled factors of variation.  

Finally, Table~\ref{tb:alpha} and Table~\ref{tb:layer_ablation} show the sensitivity of \manifoldmixup{} to the hyper-parameter $\alpha$ and the set of eligible layers $\mathcal{S}$.
(These results are based on training a PreActResNet18 for $2000$ epochs, so these numbers are not exactly comparable to the ones in Table \ref{tab:supervised}.) 
This shows that \manifoldmixup{} is robust with respect to choice of hyper-parameters, with improvements for many choices.


\begin{table*}[ht!]
\caption{Test accuracy on novel deformations.
         All models trained on normal CIFAR-100.}
\label{tab:deformation_tests}
\begin{tabular}{lrrrr} 
\toprule
Deformation & \shortstack{No Mixup} & \shortstack{Input Mixup\\($\alpha=1$)} & \shortstack{Input Mixup\\($\alpha=2$)} & \shortstack{\manifoldmixup{}\\ ($\alpha=2$)}  \\ 
\midrule
Rotation U(\ang{-20},\ang{20})      & $52.96$ & $55.55$ & 56.48 & $\underline{60.08}$  \\ 
Rotation U(\ang{-40},\ang{40})      & $33.82$ & $37.73$ & 36.78 & $\underline{42.13}$  \\  
Shearing U(\ang{-28.6}, \ang{28.6}) & $55.92$ & $58.16$ & 60.01 & $\underline{62.85}$ \\
Shearing U(\ang{-57.3}, \ang{57.3}) & $35.66$ & $39.34$ & 39.7 &  $\underline{44.27}$ \\
Zoom In (60\% rescale)              & $12.68$ & $\underline{13.75}$ & $13.12$ & $11.49$ \\
Zoom In (80\% rescale)              & $47.95$ & $52.18$ & 50.47 & $\underline{52.70}$ \\
Zoom Out (120\% rescale)            & $43.18$ & $60.02$ & 61.62 & $\underline{63.59}$ \\
Zoom Out (140\% rescale)            & $19.34$ & $41.81$ & 42.02 & $\underline{45.29}$\\

\bottomrule
\end{tabular}
\end{table*}

\begin{table}
\centering
\setlength\tabcolsep{4pt}
\begin{minipage}{0.48\textwidth}
\centering
\caption{Test accuracy \manifoldmixup{} for different sets of eligible layers $\mathcal{S}$ on CIFAR.}
\label{tb:layer_ablation}
\begin{tabular}{lrr} 
\toprule
$\mathcal{S}$ & CIFAR-10 & CIFAR-100\\ 
\midrule
$\{0,1,2\}$   & \underline{$97.23$} & $79.60$ \\
$\{0,1\}$     & $96.94$ & $78.93$ \\
$\{0,1,2,3\}$ & $96.92$ & \underline{$80.18$} \\
$\{1,2\}$     & $96.35$ & $78.69$ \\
$\{0\}$       & $96.73$ & $78.15$ \\
$\{1,2,3\}$   & $96.51$ & $79.31$ \\
$\{1\}$       & $96.10$ & $78.72$ \\
$\{2,3\}$     & $95.32$ & $76.46$ \\
$\{2\}$       & $95.19$ & $76.50$ \\
$\{ \}$       & $95.27$ & $76.40$ \\
\bottomrule
\end{tabular}

\end{minipage}%
\hfill
\begin{minipage}{0.48\textwidth}
\centering
\caption{Test accuracy (\%) of \inputmixup{} and \manifoldmixup{} for different $\alpha$ on CIFAR-10.}
\label{tb:alpha}
\begin{tabular}{rrr} 
\toprule
$\alpha$ & \inputmixup{} & \manifoldmixup{} \\ 
\midrule
$0.5$ & $96.68$ & \underline{$96.76$} \\
$1.0$ & $96.75$ & \underline{$97.00$}  \\
$1.2$ & $96.72$ & \underline{$97.03$}  \\
$1.5$ & $96.84$ & \underline{$97.10$} \\
$1.8$ & $96.80$ & \underline{$97.15$}  \\
$2.0$ & $96.73$ & \underline{$97.23$} \\
\bottomrule
\end{tabular}
 
\end{minipage}
\end{table}

\subsection{Generalization to Novel Deformations}
\label{sec:deformations}

To further evaluate the quality of representations learned with \manifoldmixup{}, we train PreActResNet34 models on the normal CIFAR-100 training split, but test them on novel (not seen during training) deformations of the test split.
These deformations include random rotations, random shearings, and different rescalings.
Better representations should generalize to a larger variety of deformations.
Table~\ref{tab:deformation_tests} shows that networks trained using \manifoldmixup{} are the most able to classify test instances subject to novel deformations, which suggests the learning of better representations.
For more results see Appendix \ref{appendix:sec:sl}, Table \ref{appendix:tb:deformation_tests}.

\subsection{Robustness to Adversarial Examples}
\label{sec:adversarial}

\begin{table}
\centering
\caption{
    Test accuracy on white-box FGSM adversarial examples on CIFAR-10 and CIFAR-100 (using a PreActResNet18 model) and SVHN (using a WideResNet20-10 model).
    We include the results of \citep{madry2017adv}$\dagger$.}
\label{tb:adversarial_examples}
\begin{tabular}{lrr} 
\toprule
CIFAR-10 & \shortstack{FGSM} \\ 
\midrule
No Mixup & 36.32   \\ 
\inputmixup{} ($\alpha=1$) & 71.51  \\ 
\shortstack[l]{\manifoldmixup{} ($\alpha=2$)} & \underline{77.50} \\ 
\shortstack[l]{PGD training (7-steps)$\dagger$} & 56.10 \\
\midrule
CIFAR-100 & \shortstack{FGSM} \\
\midrule
\inputmixup{} ($\alpha=1$) & 40.7 \\
\manifoldmixup{} ($\alpha=2$) & 44.96 \\
\bottomrule
SVHN & \shortstack{FGSM} \\
\midrule
\shortstack[l]{No Mixup} & 21.49  \\ 
\shortstack[l]{\inputmixup{} ($\alpha=1$)} & 56.98  \\ 
\shortstack[l]{\manifoldmixup{} ($\alpha=2$)} & 65.91 \\ 
\shortstack[l]{PGD training (7-steps)}$\dagger$ & \underline{72.80} \\ 
\bottomrule
\end{tabular}
\end{table}

Adversarial robustness is related to the position of the decision boundary relative to the data.
Because \manifoldmixup{} only considers some directions around data points (those corresponding to interpolations), we would not expect the model to be robust to adversarial attacks that consider any direction around each example.
However, since \manifoldmixup{} expands the set of examples seen during training, an intriguing hypothesis is that these expansions overlap with the set of possible adversarial examples, providing some degree of defense.
If this hypothesis is true, \manifoldmixup{} would force adversarial attacks to consider a wider set of directions, leading to a larger computational expense for the attacker.
To explore this, we consider the Fast Gradient Sign Method \citep[FGSM, ][]{goodfellow2014adv}, which constructs adversarial examples in one single step, 
thus considering a relatively small subset of directions around examples.
The performance of networks trained using \manifoldmixup{} against FGSM attacks is given in Table \ref{tb:adversarial_examples}.
One challenge in evaluating robustness against adversarial examples is the ``gradient masking problem'', in which a defense succeeds only by reducing the quality of the gradient signal. 
\cite{athalye2018obfuscate} explored this issue in depth, and proposed running an unbounded search for a large number of iterations to confirm the quality of the gradient signal.
\manifoldmixup{} passes this sanity check (consult Appendix \ref{appendix:sec:adversarial} for further details).
While we found that using \manifoldmixup{} improves the robustness to single-step FGSM attack (especially over \inputmixup{}), we found that \manifoldmixup{} did not significantly improve robustness against stronger, multi-step attacks such as PGD \citep{madry2017adv}.

\section{Connections to Neuroscience and Credit Assignment}

We present an intriguing connection between \manifoldmixup{} and a challenging problem in neuroscience.  At a high level, we can imagine systems in the brain which compute predictions from a stream of changing inputs, and pass these predictions onto other modules which return some kind of feedback signal~\citep{Lee-et-al-MLKDB2015-small,Scellier+Bengio-frontiers2017,Whittington+Bogacz-2018,Bartunov-et-al-2018}. For instance, these feedback signals can be gradients or targets for prediction.  There is a delay between the output of the prediction and the point in time in which the feedback can return to the system after having travelled across the brain.  Moreover, this delay could be noisy and could differ based on the type of the prediction or other conditions in the brain, as well as depending on which paths are considered (there are many skip connections between areas).  This means that it could be very difficult for a system in the brain to establish a clear correspondence between its outputs and the feedback signals that it receives over time.  

While it is preliminary, an intriguing hypothesis is that part of how systems in the brain could be working around this limitation is by averaging their states and feedback signals across multiple points in time.  The empirical results from mixup suggest that such a technique may not just allow successful computation, but also act as a potent regularizer.  \manifoldmixup{} strenghthens this result by showing that the same regularization effect can be achieved from mixing in higher level hidden representations.  

\section{Conclusion}

Deep neural networks often give incorrect, yet extremely confident predictions on examples that differ from those seen during training.
This problem is one of the most central challenges in deep learning.
We have investigated this issue from the perspective of the representations learned by deep neural networks.
We observed that vanilla neural networks spread the training data widely throughout the representation space, and assign high confidence predictions to almost the entire volume of representations.
This leads to major drawbacks since the network will provide high-confidence predictions to examples off the data manifold, thus lacking enough incentives to learn discriminative representations about the training data.
To address these issues, we introduced \manifoldmixup{}, a new algorithm to train neural networks on interpolations of hidden representations.
\manifoldmixup{} encourages the neural network to be uncertain across the volume of the representation space unseen during training.
This leads to concentrating the representations of the real training examples in a low dimensional subspace, resulting in more discriminative features.
Throughout a variety of experiments, we have shown that neural networks trained using \manifoldmixup{} have better generalization in terms of error and log-likelihood, as well as better robustness to novel deformations of the data and adversarial examples.
Being easy to implement and incurring little additional computational cost, we hope that \manifoldmixup{} will become a useful regularization tool for deep learning practitioners. 

\section*{Acknowledgements}
The authors thank Christopher Pal, Sherjil Ozair  and Dzmitry Bahdanau for useful discussions and feedback.
Vikas Verma was supported by Academy of Finland project 13312683 / Raiko Tapani AT kulut.
We would also like to acknowledge Compute Canada for providing computing resources used in this work.


\bibliography{icml2019}
\bibliographystyle{icml2019}

\clearpage
\newpage
\appendix

\section{Synthetic Experiments Analysis}
\label{appendix:sec:synthetic}
We conducted experiments using a generated synthetic dataset where each image is deterministically rendered from a set of independent factors.  The goal of this experiment is to study the impact of input mixup and an idealized version of \manifoldmixup{} where we know the true factors of variation in the data and we can do mixup in exactly the space of those factors.  This is not meant to be a fair evaluation or representation of how \manifoldmixup{} actually performs - rather it's meant to illustrate how generating relevant and semantically meaningful augmented data points can be much better than generating points by mixing in the input space.  

We considered three tasks.  In Task A, we train on images with angles uniformly sampled between (-\ang{70}, -\ang{50}) (label 0) with 50\% probability and uniformly between (\ang{50}, \ang{80}) (label 1) with 50\% probability.  At test time we sampled uniformly between (-\ang{30}, -\ang{10}) (label 0) with 50\% probability and uniformly between (\ang{10}, \ang{30}) (label 1) with 50\% probability.  Task B used the same setup as Task A for training, but the test instead used (-\ang{30}, -\ang{20}) as label 0 and (-\ang{10}, \ang{30}) as label 1.  In Task C we made the label whether the digit was a ``1'' or a ``7'', and our training images were uniformly sampled between (-\ang{70}, -\ang{50}) with 50\% probability and uniformly between (\ang{50}, \ang{80}) with 50\% probability.  The test data for Task C were uniformly sampled with angles from (-\ang{30}, \ang{30}).  

The examples of the data are in Figure \ref{fig:appendix:synth} and results are in Table \ref{appendix:tb:synthetic}.  In all cases we found that Input Mixup gave some improvements in likelihood but limited improvements in accuracy - suggesting that the even generating nonsensical points can help a classifier trained with Input Mixup to be better calibrated.  Nonetheless the improvements were much smaller than those achieved with mixing in the ground truth attribute space.  

\begin{figure*}[]

\centering
\includegraphics[width=0.3\textwidth,trim={0 14.0cm 0 0},clip]{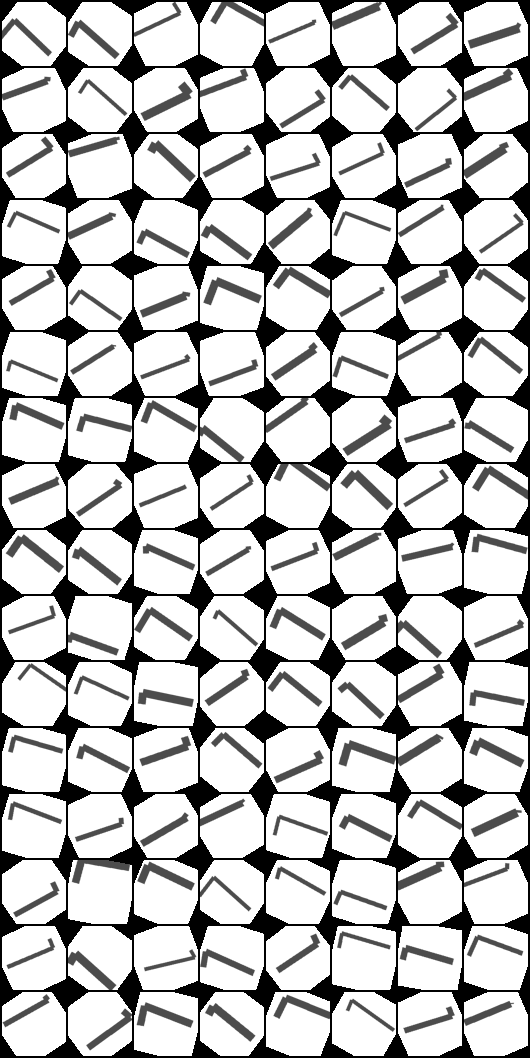}\hfill
\includegraphics[width=0.3\textwidth,trim={0 14.0cm 0 0},clip]{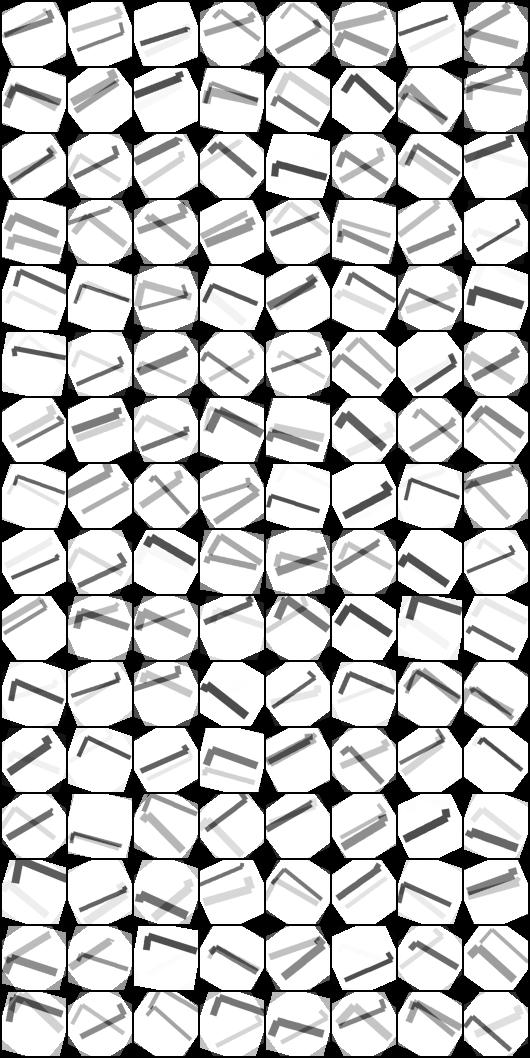}\hfill
\includegraphics[width=0.3\textwidth,trim={0 14.0cm 0 0},clip]{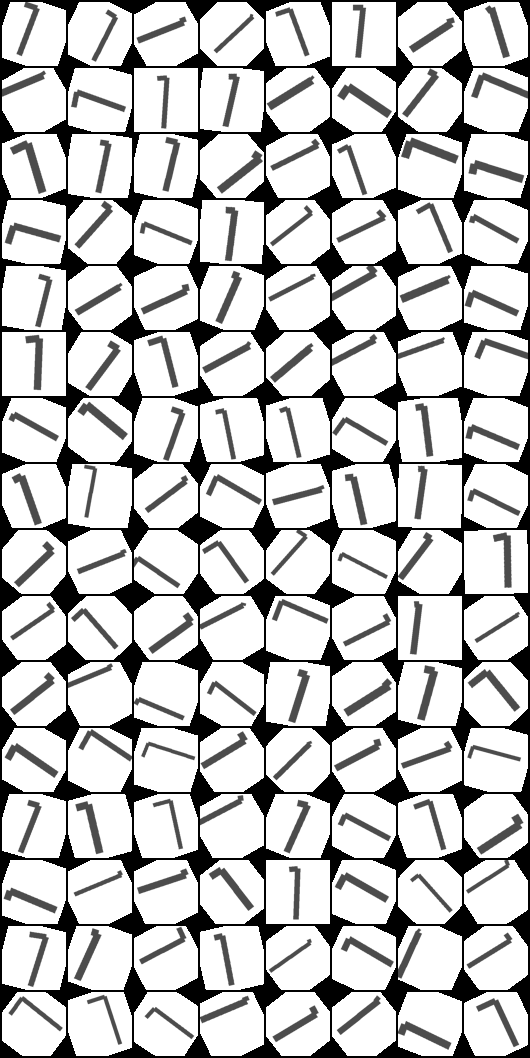}

\caption{Synthetic task where the underlying factors are known exactly.  Training images (left), images from input mixup (center), and images from mixing in the ground truth factor space (right).  }
\label{fig:appendix:synth}

\end{figure*}

{\renewcommand{\arraystretch}{1.2}
\begin{table*}[]
\centering
\caption{Results on synthetic data generalization task with an idealized Manifold Mixup (mixing in the true latent generative factors space).  Note that in all cases visible mixup significantly improved likelihood, but not to the same degree as factor mixup.  }
\label{appendix:tb:synthetic}
\begin{tabular}{llrr} 
\toprule
Task & Model & Test Accuracy & Test NLL  \\ 
\midrule
& No Mixup & 1.6 & 8.8310  \\ 
\textbf{Task A} & Input Mixup (1.0) & 0.0 & 6.0601 \\ 
& Ground Truth Factor Mixup (1.0) & 94.77 & 0.4940  \\ 
\midrule
& No Mixup & 21.25 & 7.0026 \\
\textbf{Task B} & Input Mixup (1.0) & 18.40 & 4.3149 \\
& Ground Truth Factor Mixup (1.0) & 84.02 & 0.4572 \\
\midrule
& No Mixup & 63.05 & 4.2871 \\
\textbf{Task C} & Input Mixup & 66.09 & 1.4181 \\
& Ground Truth Factor Mixup & 99.06 & 0.1279 \\
\bottomrule
\end{tabular}
\label{tab:synthetic}
\end{table*}
}


\section{Analysis of how \manifoldmixup{} changes learned representations}
\label{appendix:sec:learnedRepresentation}

We have found significant improvements from using \manifoldmixup{}, but a key question is whether the improvements come from changing the behavior of the layers before the mixup operation is applied or the layers after the mixup operation is applied.  This is a place where \manifoldmixup{} and Input Mixup are clearly differentiated, as Input Mixup has no ``layers before the mixup operation'' to change.  We conducted analytical experimented where the representations are low-dimensional enough to visualize.  More concretely, we trained a fully connected network on MNIST with two fully-connected leaky relu layers of 1024 units, followed by a 2-dimensional bottleneck layer, followed by two more fully-connected leaky-relu layers with 1024 units.  

We then considered training with no mixup, training with mixup in the input space, and training \textit{only} with mixup directly following the 2D bottleneck. We consistently found that \manifoldmixup{} has the effect of making the representations much tighter, with the real data occupying smaller region in the hidden space, and with a more well separated margin between the classes, as shown in Figure \ref{appendix:figure:mnist}

\begin{figure*}[htp]
\centering
\includegraphics[width=1.0\textwidth,trim={0 0cm 0 0},clip]{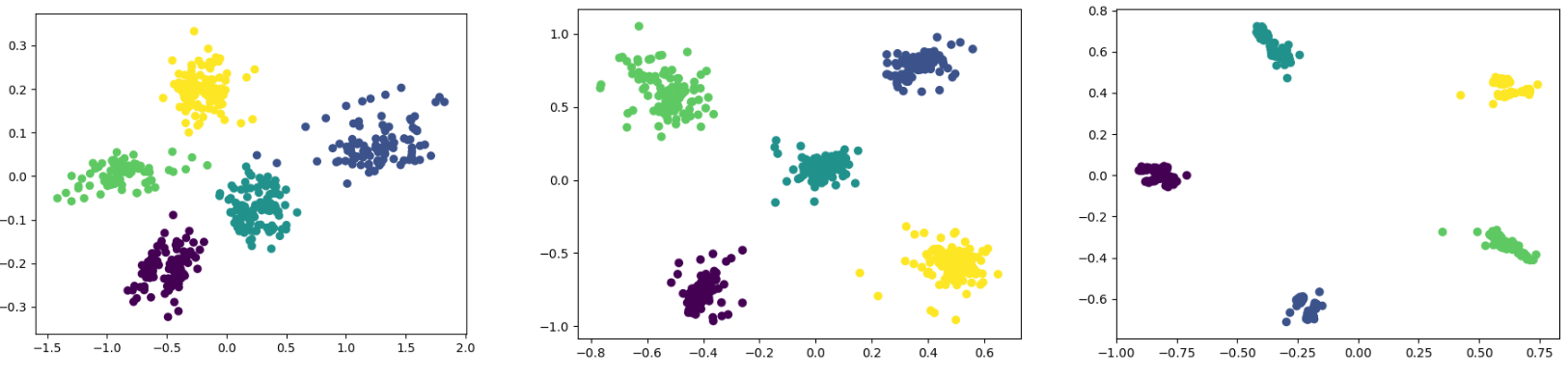}
\includegraphics[width=1.0\textwidth,trim={0 0cm 0 0},clip]{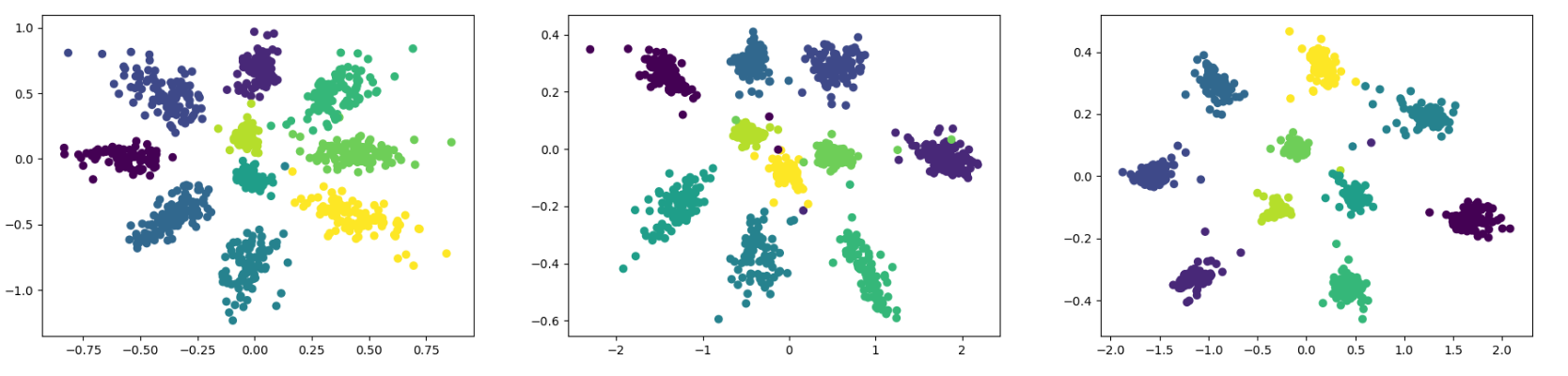}
\caption{Representations from a classifier on MNIST (top is trained on digits 0-4, bottom is trained on all digits) with a 2D bottleneck representation in the middle layer.  No Mixup Baseline (left), Input Mixup (center), \manifoldmixup{} (right).  }
\label{appendix:figure:mnist}
\end{figure*}

\section{Supervised Regularization Experimental Details}
\label{appendix:sec:sl}

{\renewcommand{\arraystretch}{1.2}
\begin{table*}[ht]
\centering
\caption{Models trained on the normal CIFAR-100 and evaluated on a test set with novel deformations.  \manifoldmixup{} (ours) consistently allows the model to be more robust to random shearing, rescaling, and rotation even though these deformations were not observed during training.  For the rotation experiment, each image is rotated with an angle uniformly sampled from the given range.  Likewise the shearing is performed with uniformly sampled angles.  Zooming-in refers to take a bounding box at the center of the image with k\% of the length and k\% of the width of the original image, and then expanding this image to fit the original size.  Likewise zooming-out refers to drawing a bounding box with k\% of the height and k\% of the width, and then taking this larger area and scaling it down to the original size of the image (the padding outside of the image is black).  }
\label{appendix:tb:deformation_tests}
\begin{tabular}{lrrrr} 
\toprule
Test Set Deformation & \shortstack{No Mixup\\Baseline} & \shortstack{Input Mixup\\$\alpha$=1.0} & \shortstack{Input Mixup\\$\alpha$=2.0} & \shortstack{\manifoldmixup{}\\$\alpha$=2.0}  \\ 
\midrule
Rotation U(\ang{-20},\ang{20}) & 52.96 & 55.55 & 56.48 & \textbf{60.08}  \\ 
Rotation U(\ang{-40},\ang{40}) & 33.82 & 37.73 & 36.78 & \textbf{42.13}  \\  
Rotation U(\ang{-60},\ang{60}) & 26.77 & 28.47 & 27.53 & \textbf{33.78} \\
Rotation U(\ang{-80},\ang{80}) & 24.19 & 26.72 & 25.34 & \textbf{29.95} \\
Shearing U(\ang{-28.6}, \ang{28.6}) & 55.92 & 58.16 & 60.01 & \textbf{62.85} \\
Shearing U(\ang{-57.3}, \ang{57.3}) & 35.66 & 39.34 & 39.7 & \textbf{44.27} \\
Shearing U(\ang{-114.6}, \ang{114.6}) & 19.57 & 22.94 & 22.8 & \textbf{24.69} \\
Shearing U(\ang{-143.2}, \ang{143.2}) & 17.55 & 21.66 & 21.22 & \textbf{23.56} \\
Shearing U(\ang{-171.9}, \ang{171.9}) & 22.38 & 25.53 & 25.27 & \textbf{28.02} \\
Zoom In (20\% rescale) & 2.43 & 1.9 & \textbf{2.45} & 2.03  \\
Zoom In (40\% rescale) & 4.97 & 4.47 & \textbf{5.23} & 4.17  \\
Zoom In (60\% rescale) & 12.68 & \textbf{13.75} & 13.12 & 11.49 \\
Zoom In (80\% rescale) & 47.95 & 52.18 & 50.47 & \textbf{52.7} \\
Zoom Out (120\% rescale) & 43.18 & 60.02 & 61.62 & \textbf{63.59} \\
Zoom Out (140\% rescale) & 19.34 & 41.81 & 42.02 & \textbf{45.29} \\
Zoom Out (160\% rescale) & 11.12 & 25.48 & 25.85 & \textbf{27.02} \\
Zoom Out (180\% rescale) & 7.98 & \textbf{18.11} & 18.02 & 15.68 \\
\bottomrule
\end{tabular}
\end{table*}
}


For supervised regularization we considered following architectures: PreActResNet18, PreActResNet34, and Wide-Resnet-28-10.  When using \manifoldmixup{}, we  selected the layer to perform mixing uniformly at random from a set of eligible layers.  In all our experiments, for the  PreActResNets architectures,
the eligible layers for mixing in  \manifoldmixup{} were : the input layer, the output from the first resblock, and the output from the second resblock.  For Wide-ResNet-20-10 architecture, the eligible layers for mixing in  \manifoldmixup{} were: the input layer and  the output from the first resblock.  For PreActResNet18, the first resblock has four layers and the second resblock has four layers.  For PreActResNet34, the first resblock has six layers and the second resblock has eight layers. For Wide-Resnet-28-10, the first resblock has four layers.  
Thus the mixing is often done fairly deep layers in the network. 

Throughout our experiments, we use SGD+Momentum optimizer with learning rate 0.1, momentum 0.9 and  weight-decay  $10^{-4}$, with step-wise learning rate decay.

For  Table \ref{tab:cifar10}, Table \ref{tab:cifar100} and Table \ref{tab:svhn}, we train the PreActResNet18, and PreActResNet34 for 1200 epochs with learning rate annealed by a factor of 10 at epoch 400 and 800. For above Tables, we train Wide-ResNet-28-10 for 400 epochs with learning rate annealed by a factor of 10 at epoch 200 and 300. In Table \ref{tab:tinyimagenet}, we train PreActResNet18 for 2000 epochs with learning rate annealed by a factor of 10 at epoch 1000 and 1500.

For Table \ref{tb:alpha} and Table \ref{tb:layer_ablation}, we train the PreActResNet18 network for 2000 epochs with learning rate annealed by a factor of 10 at epoch 1000 and 1500.

For Table \ref{tb:adversarial_examples},  Table \ref{tab:deformation_tests} and Table \ref{appendix:tb:deformation_tests}, we train the networks for 1200 epochs with  learning rate annealed by a factor of 10 at epoch 400 and 800.

In Figure \ref{appendix:fig:cifar10_train_loss} and Figure \ref{appendix:fig:cifar100_train_loss}, we present the training loss (Binary cross entropy) for CIFAR10 and CIFAR100 datasets respectively. We observe that performing \manifoldmixup{} in higher layers allows the train loss to go down faster as compared to the Input Mixup, which suggests that while Input Mixup may suffer from underfitting, \manifoldmixup{} alleviates this problem to some extend.

In Table \ref{appendix:tb:deformation_tests}, we present full set of experiments of Section \ref{sec:deformations}.
\subsection{Hyperparameter $\alpha$}
For Input Mixup on CIFAR10 and CIFAR100 datasets, we used the value $\alpha=1.0$ as recommended in \citep{zhang2017mixup}. For Input Mixup on SVHN and Tiny-imagenet datasets, we experimented with the $\alpha$ values in the set $\{0.1, 0.2, 0.4, 0.8. 1.0, 2.0, 4.0\}$. We obtained best results using $\alpha = 1.0$ and $\alpha = 0.2 $ for SVHN and Tiny-imagenet, respectively.

For \manifoldmixup{}, for all datasets, we experimented with the $\alpha$ values in the set $\{0.1, 0.2, 0.4, 0.8. 1.0, 2.0, 4.0\}$. We obtained best results with $\alpha=2.0$ for CIFAR10, CIFAR100 and SVHN and with $\alpha=0.2$ for Tiny-imagenet.



\begin{figure*}
  \centering
  \includegraphics[width=0.7\linewidth,trim={0 0cm 0 0},clip]{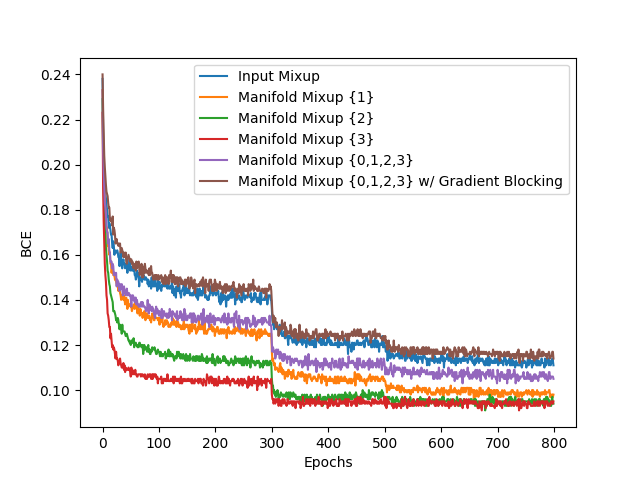}\hfill
  \caption{CIFAR-10 train set Binary Cross Entropy Loss (BCE) on Y-axis using PreActResNet18, with respect to training epochs (X-axis). The numbers in \{\} refer to the resblock  after which \manifoldmixup{} is performed.  The ordering of the losses is consistent over the course of training: Manifold Mixup with gradient blocked before the mixing layer has the highest training loss, followed by Input Mixup.  The lowest training loss is achieved by mixing in the deepest layer, which suggests that having more hidden units can help to prevent underfitting.  }
  \label{appendix:fig:cifar10_train_loss}
\end{figure*}

\begin{figure*}
  \centering
  \includegraphics[width=0.65\linewidth,trim={0 0cm 0 0},clip]{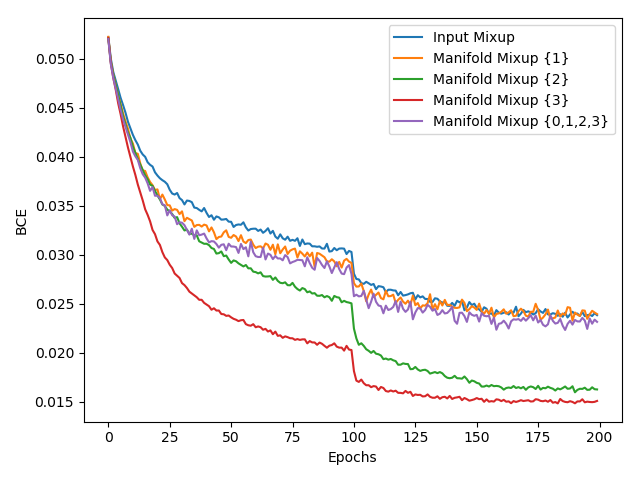}\hfill
  \caption{CIFAR-100 train set Binary Cross Entropy Loss (BCE) on Y-axis using PreActResNet50, with respect to training epochs (X-axis). The numbers in \{\} refer to the resblock  after which \manifoldmixup{} is performed. The lowest training loss is achieved by mixing in the deepest layer.  }
  \label{appendix:fig:cifar100_train_loss}
\end{figure*}

\section{Adversarial Examples}
\label{appendix:sec:adversarial}

We ran the unbounded projected gradient descent (PGD) \citep{madry2017adv} sanity check suggested in \citep{athalye2018obfuscate}.  We took our trained models for the input mixup baseline and manifold mixup and we ran PGD for 200 iterations with a step size of 0.01 which reduced the mixup model's accuracy to 1\% and reduced the \manifoldmixup{} model's accuracy to 0\%.  This is a evidence that our defense did not improve results primarily as a result of gradient masking.  


\section{Generative Adversarial Networks}
\label{appendix:sec:gan}
The recent literature has suggested that regularizing the discriminator is beneficial for training GANs \citep{improved_techniques, wgan, wgan_gp, spec_norm}. In a similar vein, one could add mixup to the original GAN training objective such that the extra data augmentation acts as a beneficial regularization to the discriminator, which is what was proposed in \cite{zhang2017mixup}. Mixup proposes the following objective\footnote{The formulation written is based on the official code provided with the paper, rather than the description in the paper. The discrepancy between the two is that the formulation in the paper only considers mixes between real and fake.}:
\begin{equation}
\max_{g} \min_{d} \mathbb{E}_{x_1,x_2,\lambda,z} \ \ell(d(\text{Mix}_\lambda(x_{1}, x_{2})), y(\lambda; x_{1}, x_{2})),
\label{eq:gan_mixup}
\end{equation}
where $x_{1}, x_{2}$ can be either real or fake samples, and $\lambda$ is sampled from a $Uniform(0,\alpha)$. Note that we have used a function $y(\lambda; x_{1}, x_{2})$ to denote the label since there are four possibilities depending on $x_{1}$ and $x_{2}$:
\begin{equation}
    y(\lambda; x_{1}, x_{2})= 
\begin{cases}
    \lambda,& \text{if } x_{1} \text{ is real and } x_{2} \text{ is fake} \\
    1-\lambda,      & \text{if } x_{1} \text{ is fake and } x_{2} \text{ is real} \\
    0, & \text{if both are fake} \\
    1, & \text{if both are real}
\end{cases}
\label{eq:lambda_cases}
\end{equation}

In practice however, we find that it did not make sense to create mixes between real and real where the label is set to 1, (as shown in equation \ref{eq:lambda_cases}), since the mixup of two real examples in input space is not a real example. So we only create mixes that are either real-fake, fake-real, or fake-fake. Secondly, instead of using just the equation in \ref{eq:gan_mixup}, we optimize it in addition to the regular minimax GAN equations:
\begin{equation}
\begin{split}
\max_{g} \min_{d} \mathbb{E}_{x} \ \ell(d(x), 1) + \mathbb{E}_{g(z)} \ \ell(d(g(z)), 0) +
\text{GAN mixup term (Equation \ref{eq:gan_mixup})} 
\end{split}
\end{equation}

Using similar notation to earlier in the paper, we present the manifold mixup version of our GAN objective in which we mix in the hidden space of the discriminator:
\begin{equation}\label{eq:gan}
\begin{split}
\min_{d} \mathbb{E}_{x_1,x_2,\lambda,z,k} \ \ell(d(x), 1) + \ell(d(g(z), 0) +
\ell(f_{k}(\text{Mix}_\lambda(g_{k}(x_{1}), g_{k}(x_{2}))), y(\lambda; x_{1}, x_{2})),
\end{split}
\end{equation}
where $g_k(\cdot)$ is a function denoting the intermediate output of the discriminator at layer $k$, and $f_{k}(\cdot)$ the output of the discriminator given input from layer $k$. 

The layer $k$ we choose the sample can be arbitrary combinations of the input layer (i.e., input mixup), or the first or second resblocks of the discriminator, all with equal probability of selection.

We run some experiments evaluating the quality of generated images on CIFAR10, using as a baseline JSGAN with spectral normalization \citep{spec_norm} (our configuration is almost identical to theirs). Results are averaged over at least three runs\footnote{Inception scores are typically reported with a mean and variance, though this is across multiple splits of samples across a single model. Since we run multiple experiments, we average their respective means and variances.}. From these results, the best-performing mixup experiments (both input and \manifoldmixup{}) is with $\alpha = 0.5$, with mixing in all layers (both resblocks and input) achieving an average Inception / FID of $8.04 \pm 0.08$ / $21.2 \pm 0.47$, input mixup achieving $8.03 \pm	0.08$	/ $21.4 \pm 0.56$, for the baseline experiment $7.97 \pm 0.07$ / $21.9 \pm 0.62$. This suggests that mixup acts as a useful regularization on the discriminator, which is even further improved by \manifoldmixup{}. (See Figure \ref{fig:appendix:gan_barplots} for the full set of experimental results.)


\begin{figure}[H]
  \centering
  \includegraphics[width=1.0\linewidth]{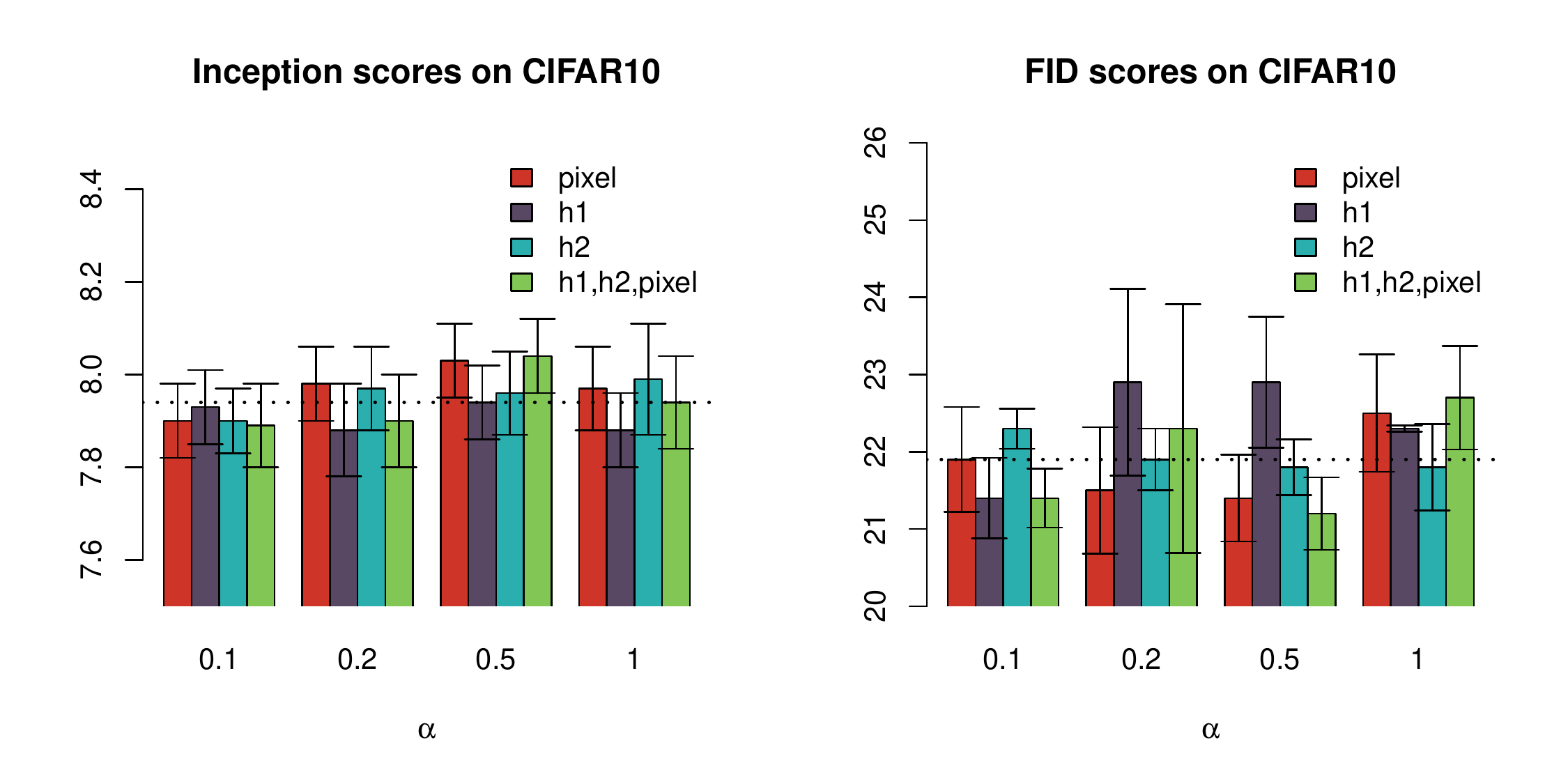}
  
  \caption{We test out various values of $\alpha$ in conjunction with either: input mixup ( \texttt{pixel}) \citep{zhang2017mixup}, mixing in the output of the first resblock (\texttt{h1}), mixing in either the output  of the first resblock or the output of the second resblock (\texttt{h1,2}), and mixing in the input or the output of the first resblock or the output of the second resblock (\texttt{1,2,pixel}). The dotted line indicates the baseline Inception / FID score. Higher scores are better for Inception, while lower is better for FID.}
  \label{fig:appendix:gan_barplots}
\end{figure}

\section{Intuitive Explanation of how Manifold Mixup avoids Inconsistent Interpolations}
\label{appendix:sec:collision}
An essential motivation behind manifold mixup is that as the network \textit{learns} the hidden states, it does so in a way that encourages them to be a flatter (per-class).  Section~\ref{sec:theory} characterized this for hidden states with any number of dimensions and Figure~\ref{fig:2dbottleneck} showed how this can occur on the 2D spiral dataset.  

Our goal here is to discuss concrete examples to illustrate why this flattening happens, as shown in Figure \ref{fig:intuitive}.  If we consider any two points, the interpolated point between them is based on a sampled $\lambda$ and the soft-target for that interpolated point is the targets interpolated with the same $\lambda$.  So if we consider two points A,B which have the same label, it is apparent that every point on the line between A and B should have that same label with 100\% confidence.  If we consider two points A,B with different labels, then the point which is halfway between them will be given the soft-label of 50\% the label of A and 50\% the label of B (and so on for other $\lambda$ values).  

It is clear that for many arrangements of data points, it is possible for a point in the space to be reached through distinct interpolations between different pairs of examples, and reached with different $\lambda$ values.  Because the learned model tries to capture the distribution $p(y \vert h)$, it can only assign a single distribution over the label values to a single particular point (for example it could say that a point is 100\% label A, or it could say that a point is 50\% label A and 50\% label B).  Intuitively, these inconsistent soft-labels at interpolated points can be avoided if the states for each class are more concentrated and the representations do not have variability in directions pointing towards other classes.  This leads to flattening: a reduction in the number of directions with variability.  The theory in Section~\ref{sec:theory} characterizes exactly what this concentration needs to be: that the representations for each class need to lie on a subspace of dimension equal to ``number of hidden dimensions'' - ``number of classes'' + 1.  


\section{Spectral Analysis of Learned Representations}
\label{appendix:sec:flattening}

When we refer to \textit{flattening}, we mean that the class-specific representations have reduced variability in some directions.  Our analysis in this section makes this more concrete.

We trained an MNIST classifier with a hidden state bottleneck in the middle with 12 units (intentionally selected to be just slightly greater than the number of classes).  We then took the representation for each class and computed a singular value decomposition (Figure~\ref{fig:appendix:svd_class_12} and Figure~\ref{fig:appendix:svd_class_30}) and we also computed an SVD over all of the representations together (Figure~\ref{fig:appendix:svd_all}).  Our architecture contained three hidden layers with 1024 units and LeakyReLU activation, followed by a bottleneck representation layer (with either 12 or 30 hidden units), followed by an additional four hidden layers each with 1024 units and LeakyReLU activation.  When we performed \manifoldmixup{} for our analysis, we only performed mixing in the bottleneck layer, and used a beta distribution with an alpha of 2.0.  Additionally we performed another experiment (Figure~\ref{fig:appendix:svd_early_layer} where we placed the bottleneck representation layer with 30 units immediately following the first hidden layer with 1024 units and LeakyReLU activation.  

We found that \manifoldmixup{} had a striking effect on the singular values, with most of the singular values becoming much smaller.  Effectively, this means that the representations for each class have variance in fewer directions.  While our theory in Section~\ref{sec:theory} showed that this flattening must force each classes representations onto a lower-dimensional subspace (and hence an upper bound on the number of singular values) but this explores how this occurs empirically and does not require the number of hidden dimensions to be so small that it can be manually visualized.  In our experiments we tried using 12 hidden units in the bottleneck Figure~\ref{fig:appendix:svd_class_12} as well as 30 hidden units Figure~\ref{fig:appendix:svd_class_30} in the bottleneck.  

Our results from this experiment are unequivocal: \manifoldmixup{} dramatically reduces the size of the smaller singular values for each classes representations.  This indicates a flattening of the class-specific representations.  At the same time, the singular values over all the representations are not changed in a clear way (Figure~\ref{fig:appendix:svd_all}), which suggests that this flattening occurs in directions which are distinct from the directions occupied by representations from other classes, which is the same intuition behind our theory.  Moreover, Figure~\ref{fig:appendix:svd_early_layer} shows that when the mixing is performed earlier in the network, there is still a flattening effect, though it is weaker than in the later layers, and again Input Mixup has an inconsistent effect.  

\begin{figure*}
  \centering
  \includegraphics[width=1.0\linewidth]{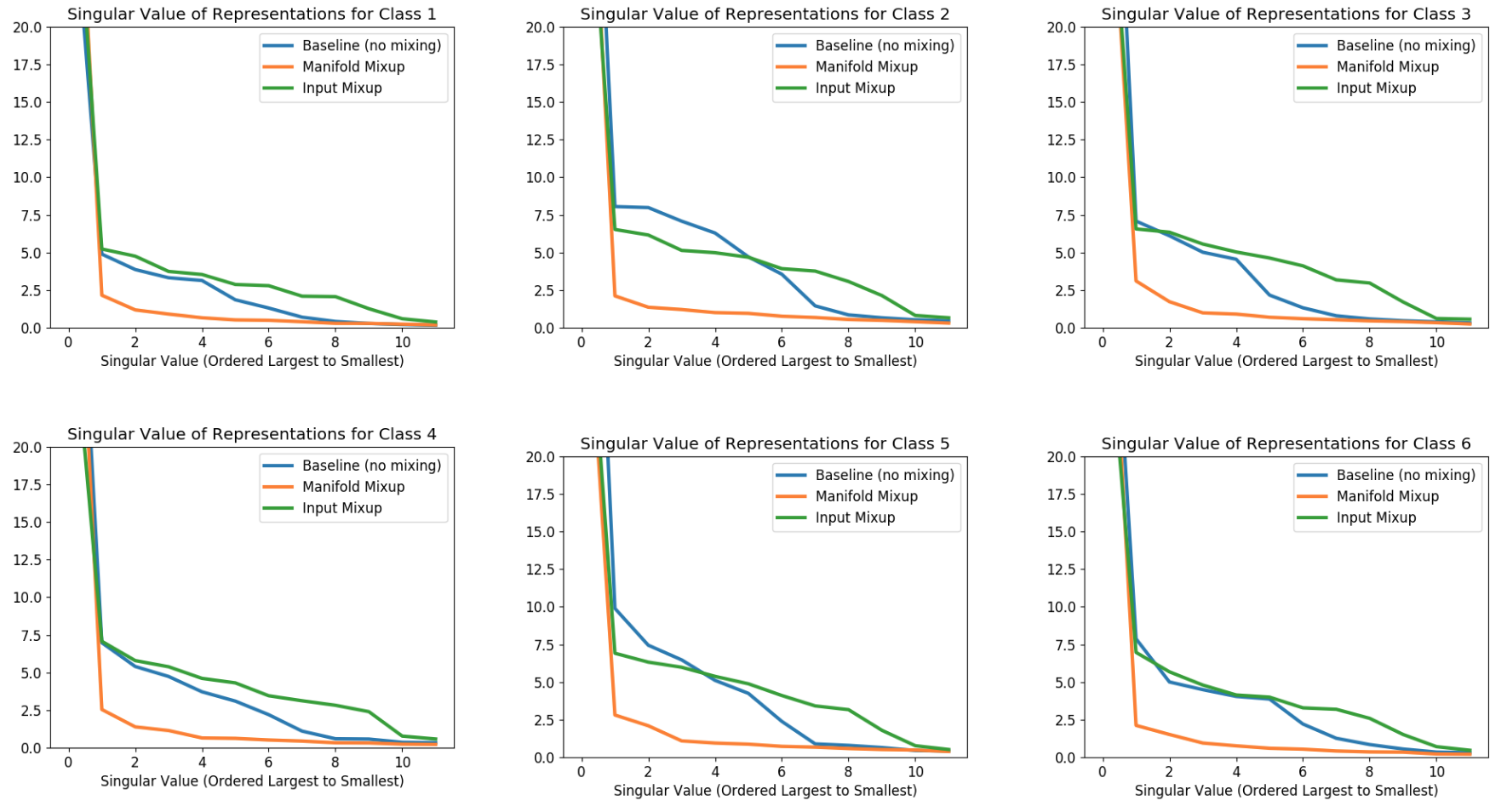}
  \caption{SVD on the class-specific representations in a bottleneck layer with 12 units following 3 hidden layers.  For the first singular value, the value (averaged across the plots) is 50.08 for the baseline, 37.17 for Input Mixup, and 43.44 for \manifoldmixup{} (these are the values at x=0 which are cutoff).  We can see that the class-specific SVD leads to singular values which are dramatically more concentrated when using \manifoldmixup{} with Input Mixup not having a consistent effect.  }
  \label{fig:appendix:svd_class_12}
\end{figure*}

\begin{figure*}
  \centering
  \includegraphics[width=1.0\linewidth]{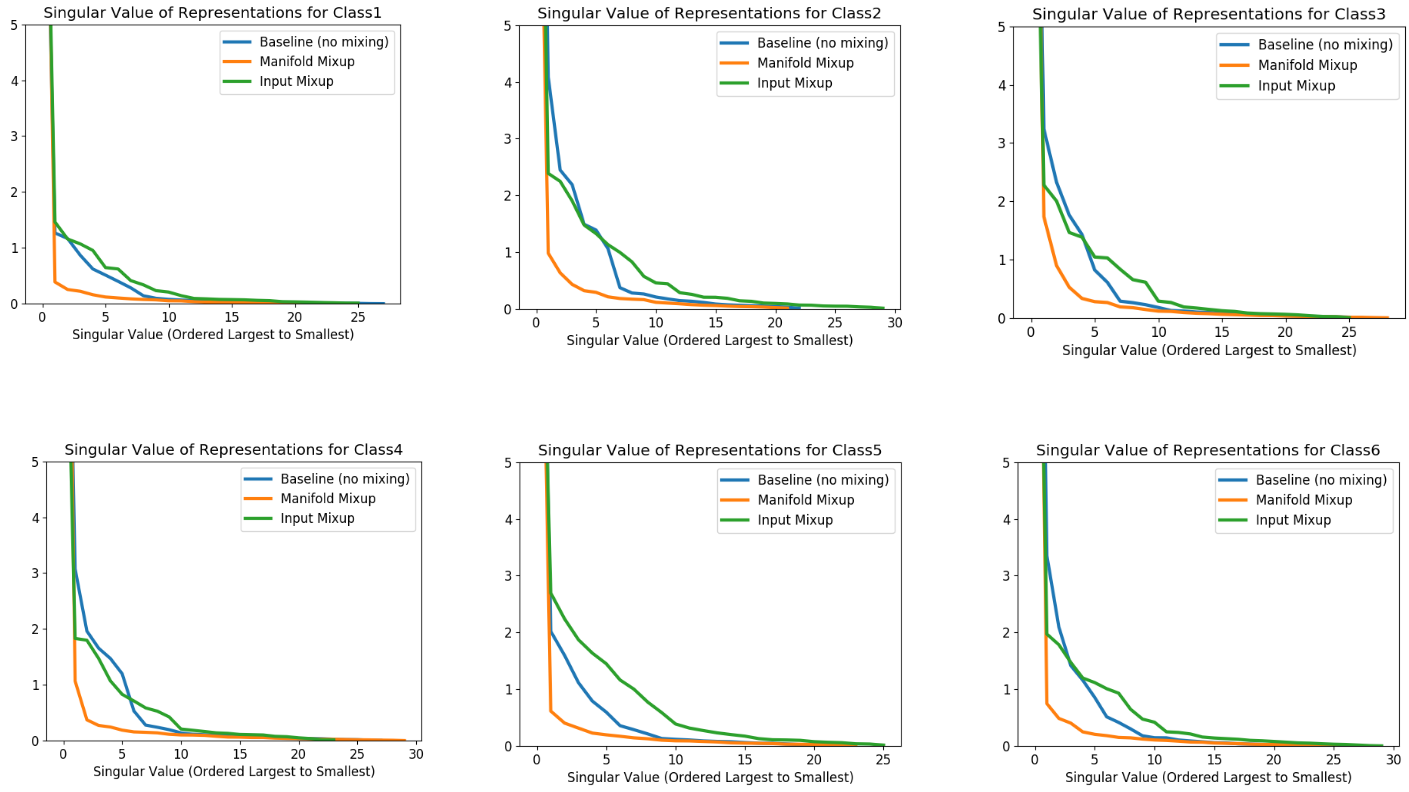}
  \caption{SVD on the class-specific representations in a bottleneck layer with 30 units following 3 hidden layers.  For the first singular value, the value (averaged across the plots) is 14.68 for the baseline, 12.49 for Input Mixup, and 14.43 for \manifoldmixup{} (these are the values at x=0 which are cutoff).  }
  \label{fig:appendix:svd_class_30}
\end{figure*}

\begin{figure*}
  \centering
  \includegraphics[width=1.0\linewidth]{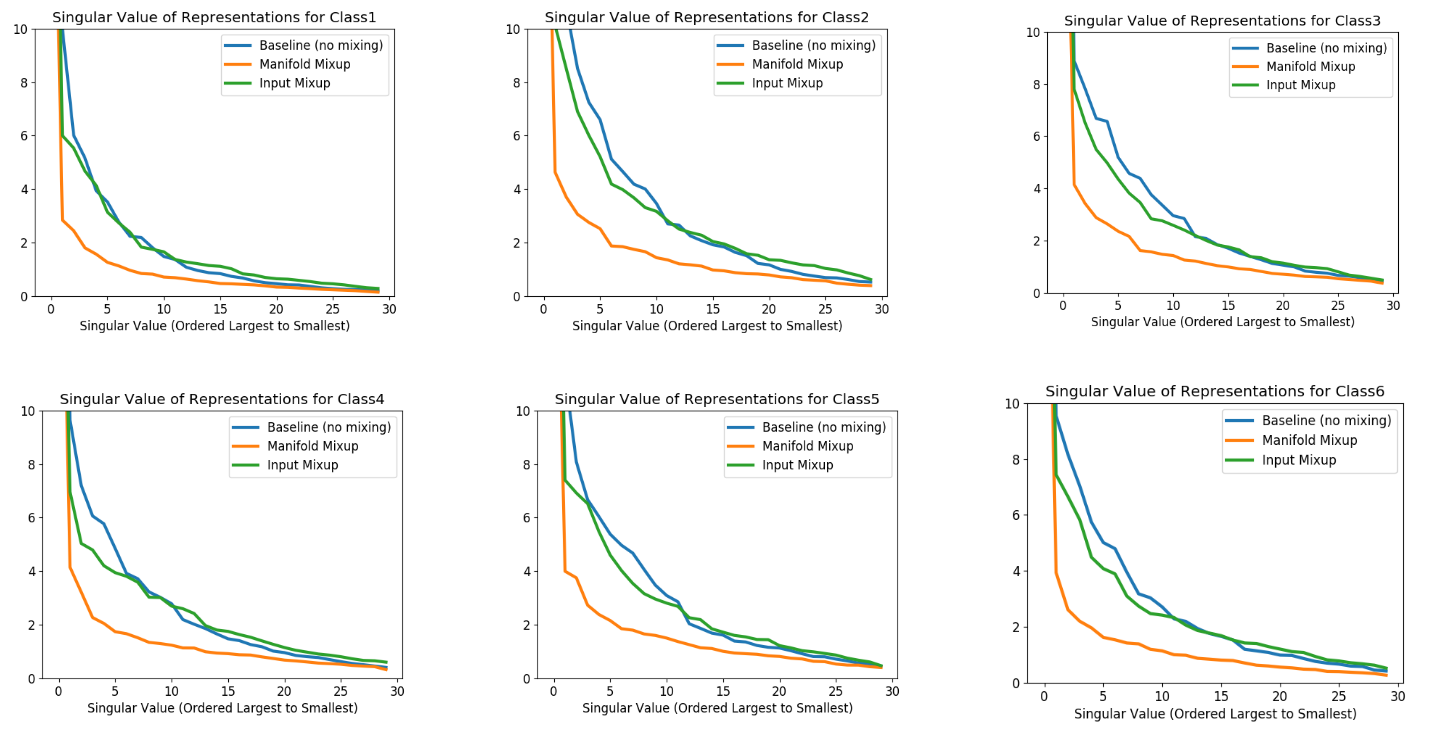}
  \caption{SVD on the class-specific representations in a bottleneck layer with 30 units following a single hidden layer.  For the first singular value, the value (averaged across the plots) is 33.64 for the baseline, 27.60 for Input Mixup, and 24.60 for \manifoldmixup{} (these are the values at x=0 which are cutoff).  We see that with the bottleneck layer placed earlier, the reduction in the singular values from \manifoldmixup{} is smaller but still clearly visible.  This makes sense, as it is not possible for this early layer to be perfectly discriminative.  }
  \label{fig:appendix:svd_early_layer}
\end{figure*}

\begin{figure*}[]
  \centering
  \includegraphics[width=0.7\linewidth,trim={0 0cm 0 0},clip]{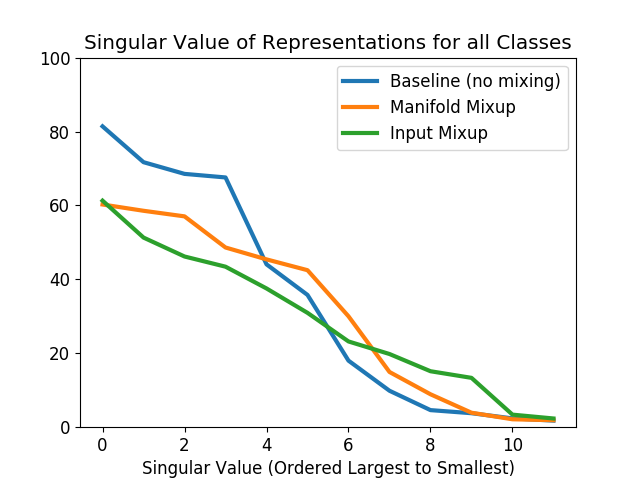}
  \caption{When we run SVD on all of the classes together (in the setup with 12 units in the bottleneck layer following 3 hidden layers), we see no clear difference in the singular values for the Baseline, Input Mixup, and \manifoldmixup{} models.  Thus we can see that the flattening effect of manifold mixup is entirely class-specific, and does not appear overall, which is consistent with what our theory has predicted.  More intuitively, this means that the directions which are being flattened are those directions which point towards the representations of different classes.  }
  \label{fig:appendix:svd_all}
\end{figure*}




\clearpage



\end{document}